\newtheorem{remark}{Remark}
\newtheorem{proposition}{Proposition}
\newtheorem{definition}{Definition}
\newtheorem{theorem}{Theorem}
\newtheorem{problem}{Problem}
\newcommand{\ie}{{\it i.e., }}
\newcommand{\eg}{{\it e.g., }}
\newcommand{\ev}{\Diamond}
\newcommand{\gl}{\Box}
\newcommand{\un}{\mathcal U}
\newcommand{\nextltl}{\bigcirc}
\newcommand{\andltl}{\wedge}
\newcommand{\orltl}{\vee}
\newcommand{\notltl}{\neg}
\newcommand{\BA}{B\"{u}chi automaton}
\newcommand{\BAs}{B\"{u}chi automata}
\newcommand{\la}{\mathcal L}
\newcommand{\lb}{\mathcal B}
\newcommand{\lt}{\mathcal T}
\newcommand{\lc}{\mathcal C}
\newcommand{\lp}{\mathcal P}
\newcommand{\hd}{\widehat{\delta}}
\title{\LARGE \bf
Synthesis of Distributed Control and Communication Schemes\\ from Global LTL Specifications}
\author{Yushan Chen, Xu Chu Ding, and Calin Belta 
\thanks{Y. Chen is with Department of Electrical and Computer Engineering,
        Boston University, Boston, MA, 02215, U.S.A
        {\tt\small yushanc@bu.edu}}%
\thanks{X. C. Ding and C. Belta are with the Department of Mechanical Engineering, Boston University, Boston, MA, 02215, U.S.A
        {\tt\small \{xcding,cbelta\}@bu.edu}}%
\thanks{Y. Chen is the corresponding author.}%
\thanks{This work was partially supported by ONR MURI N00014-09-1051, ARO W911NF-09-1-0088, AFOSR YIP FA9550-09-1-020 and NSF CNS-0834260 at Boston University.}
}
\begin{document}
\maketitle
\thispagestyle{empty}
\pagestyle{empty}
\begin{abstract}
We introduce a technique for
synthesis of control and communication strategies for a team of agents
from a global task specification given as a Linear Temporal Logic
(LTL) formula over a set of properties that can be satisfied by the agents.
We consider a purely discrete scenario, in which the
dynamics of each agent is modeled as a finite transition system.
The proposed computational framework consists of two main steps. First,
we extend results from concurrency theory to check
whether the specification is distributable among the agents.
Second,  we generate individual control and communication strategies
by using ideas from LTL model checking.
We apply the method to automatically deploy a team of miniature cars in our
Robotic Urban-Like Environment.
\end{abstract}

\section{Introduction}
\label{sec:intro}

In control problems, ``complex" models, such as systems of differential equations, are usually checked against ``simple'' specifications, such as the stability of an equilibrium, the invariance of a set, controllability, and observability. In formal synthesis (verification), ``rich'' specifications such as languages and formulas of temporal logics are checked against ``simple'' models of software programs and digital circuits, such as (finite) transition systems. Recent studies show promising  possibilities to bridge this gap by developing theoretical frameworks and computational tools, which allow one to synthesize controllers for continuous and hybrid systems satisfying specifications in rich languages. Examples include Linear Temporal Logic (LTL) \cite{KB-TAC08-LTLCon}, fragments of LTL \cite{Hadas-ICRA07, Tok-Ufuk-Murray-CDC09}, Computation Tree Logic (CTL) \cite{Quottrup04}, mu-calculus \cite{Karaman_mu_09}, and regular expressions \cite{yushandars}.

A fundamental challenge in this area  is to construct finite models that accurately capture behaviors of dynamical systems. Recent approaches are based on the notion of abstraction \cite{alur2000discrete} and equivalence relations such as simulation and bisimulation \cite{Milner89}. Enabled by recent developments in hierarchical abstractions of dynamical systems \cite{KB-TAC08-LTLCon}, it is now possible to model systems with linear dynamics \cite{linearPappas}, polynomial dynamics \cite{TiwariHSCC}, and nonholonomic (unicycle) dynamics \cite{Lindemann06realtime} as finite transition systems.

More recent work suggests that such hierarchical abstraction techniques for a single agent can be extended to multi-agent systems, using parallel compositions \cite{Quottrup04, KB-TRO-2009}.
The two main limitations of this approach are the state space explosion problem and the need for frequent agent synchronization.
References \cite{yushandars, Hailin09} addressed both of these limitations with ``top-down'' approaches, by drawing inspirations from distributed formal synthesis \cite{mukund2002}. The main idea is to decompose a global specification into local specifications, which can then be used to synthesize controllers for the individual agents. The main drawback of these methods is that, the expressivity is limited to regular languages.

\begin{figure}
\center
   \begin{tabular}{cc}
         \includegraphics[scale = 0.35]{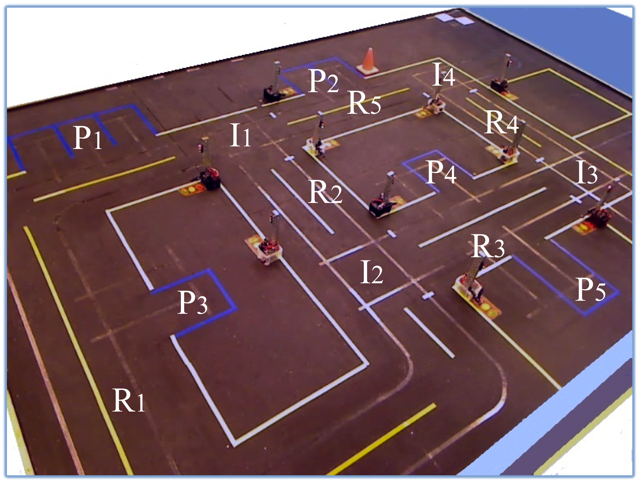}
   \end{tabular}
\caption{The topology of the Robotic Urban-Like Environment (RULE) and the road, intersection, and parking lot labels.
} \label{fig:platform}
\end{figure}

In this paper, we address a purely discrete problem, in which each agent is modeled as a finite transition system:
{\bf Given} 1) a set of properties of interest that need to be satisfied, 2) a team of agents and their capacities and cooperation requirements for satisfying properties, 3) a task specification describing how the properties need to be satisfied subject to some temporal and logical constraints in the form of an LTL formula over the set of properties; {\bf Find} provably-correct individual control and communication strategies for each agent such that the task is accomplished. Drawing inspiration from the areas of concurrency theory \cite{Mazur95} and distributed formal synthesis \cite{mukund2002},
we develop a top-down approach that allows for the fully automatic synthesis of individual control and communication schemes.  This framework is quite general and can be used in conjunction with abstraction techniques to control multiple agents with continuous dynamics.

The contribution of this work is threefold. First, we develop a computational framework to synthesize individual  control and communication strategies from global specifications given as LTL formulas over a set of interesting properties. This is a significant improvement over \cite{ yushandars} by increasing the expressivity of specifications.
Second, we extend the approach of checking closure properties
of  temporal logic specifications in \cite{Peled98} to generate distributed control and communication strategies for a team of agents while considering their dynamics. Specifically, we show how a satisfying distributed execution can be found when the global specification is traced-closed.
Third, we implement and illustrate the computational framework in our Khepera-based Robotic Urban-Like Environment (RULE) (Fig. \ref{fig:platform}). In this experimental setup, robotic cars can be automatically deployed from specifications given as LTL formulas to service requests that occur at the different locations while avoiding the unsafe regions.

The remainder of the paper is organized as follows. Some preliminaries are introduced in Sec. \ref{sec:prelim}. The problem is formulated in Sec. \ref{sec:prob_form}. An approach  for
distributing the global specification over a team of agents and
synthesizing individual control and communication strategies is presented in Sec. \ref{sec:synthesis}.
The method is applied to the RULE platform in Sec. \ref{sec:casestudy}.
We conclude with final remarks and directions for future work in Sec. \ref{sec:future}.

\section{Preliminaries}
\label{sec:prelim}
For a set $\Sigma$,  we use $|\Sigma|$, $2^{\Sigma}$, $\Sigma^*$, and $\Sigma^{\omega}$
 to denote its cardinality,  power set,  set of finite words,  and set of infinite words, respectively. We define $\Sigma^{\infty} = \Sigma^{*} \cup \Sigma^{\omega}$ and denote the empty word by $\epsilon$.
In this section, we provide background material on Linear Temporal Logic, automaton, and concurrency theory.

\begin{definition}[\textbf{transition system}]\label{definition:TS}
 A transition system (TS) is a tuple $\mathcal{T}: = (S, s_{0}, \rightarrow, \Sigma, h)$, consisting of (i) a finite set of states $S$; (ii) an initial states $s_{0} \in S$; (iii)  a transition relation $\rightarrow \subseteq S \times S$; (iv) a finite set of properties $\Sigma$; and (v) an output map $h: S \rightarrow \Sigma$.
\end{definition}

A transition $(s,s')\in\rightarrow$ is also denoted by $s \rightarrow s'$. Properties can be either true or false at each state of $\lt$.
The output map $h(s)$, where $s\in S$, defines the property valid at state $s$.
A finite {\it trajectory} of $\mathcal{T}$ is a finite sequence $r_{\lt} = s(0)s(1)\ldots s(n)$ with the property that $s(0) = s_0$ and $s(i) \rightarrow s(i + 1)$, for all $i\geq 0$.  Similarly, an infinite trajectory of $\mathcal{T}$ is an infinite sequence $r_{\lt} = s(0)s(1)\ldots$ with the same property. A finite or infinite trajectory generates a  finite or infinite {\it word} as a sequence of properties valid at each state, denoted by $w = h(s(0))h(s(1))\ldots h(s(n))$ or  $w = h(s(0))h(s(1))\ldots$, respectively.

We employ Linear Temporal Logic (LTL) formulas  to express global tasks for a team of agents. Informally,
LTL formulas are built from a set of properties $\Sigma$, standard Boolean operators $\notltl$ (negation), $\orltl$ (disjunction), $\andltl$ (conjunction), and temporal operators $\nextltl$ (next), $\un$ (until), $\ev$ (eventually), $\gl$ (always).
The semantics of LTL formulas are given over infinite words $w$ over $\Sigma$, such as those generated by a transition system defined in Def. \ref{definition:TS}.
We say an infinite trajectory $r_{\lt}$ of $\lt$ satisfies an LTL formula $\phi$ if and only if the word generated by $r_{\lt}$ satisfies $\phi$.

A word satisfies an LTL formula $\phi$ if $\phi$ is true at the first
position of the word; $\nextltl \phi$ states that at the next state, an LTL formula $\phi$ is true; $\ev \phi$ means that $\phi$ eventually becomes true
in the word; $\gl \phi$ means that $\phi$ is true at all positions of the word; $\phi_{1} \ \un \phi_{2}$ means $\phi_{2}$ eventually becomes true and $\phi_{1}$ is true until this happens. More expressivity can be achieved by combining the above temporal and Boolean operators.  Examples include $\gl\ev\phi$ ($\phi$ is true infinitely often) and $\ev\gl\phi$ ($\phi$ becomes eventually true and stays true forever).

For every LTL formula $\phi$ over $\Sigma$, there exists a \BA \ accepting all and only the words satisfying $\phi$ \cite{vardi1994reasoning}. We refer readers
to \cite{gastin2001fast} and references therein for efficient algorithms
and freely downloadable implementations to translate a LTL
formula $\phi$ to a corresponding \BA.

\begin{definition}[\textbf{\BA}]\label{def:BA}
A \BA \ is a tuple $\lb := (Q, Q^{in}, \Sigma, \delta, F)$, consisting of (i) a finite set of states $Q$; (ii) a set of initial states $Q^{in}\subseteq Q$; (iii) an input alphabet $\Sigma$; (iv) a transition function $\delta: Q \times \Sigma \rightarrow 2^{Q}$; (v) a set of accepting states $F\subseteq Q$.
\end{definition}

A {\it run} of the \BA \  over an infinite word $w= w(0)w(1)\ldots$ over $\Sigma$ is a sequence $r_{\lb} = q(0)q(1)\ldots$, such that $q(0) \in Q^{in}$ and $q(i+1) \in \delta(q(i), w(i))$. A \BA \ accepts a {\it word} $w$ if and only if  there exists $r_{\lb}$ over $w$ so that $\text{{\em inf}}(r_{\lb}) \cap F \neq \emptyset$, where $\text{{\em inf}}(r_{\lb})$ denotes the set of states appearing infinitely often in run $r_{\lb}$.
The {\it language} accepted by a \BA, denoted by  $\la(\lb)$, is the set of all infinite words accepted by $\lb$.
We use $\lb_{\phi}$ to denote the \BA\ accepting the language satisfying $\phi$.

\begin{remark}
In LTL model checking \cite{baier2008principles}, several properties can be valid at one state of a transition system (also called Kripke structure). The words produced by a transition system and accepted by a \BA \ are over the power set of propositions (\ie $2^{\Sigma}$). In this paper, by allowing only one property to be valid at a state, we consider a particular case where we allow only one property to be valid at each state of a TS by defining $h$ in Def. \ref{definition:TS} as a mapping from $S$ to $\Sigma$.
As a consequence, the words generated by $\lt$ and accepted by $\lb$ are over $\Sigma$.
\end{remark}

\begin{definition}[\textbf{distribution}]\label{def:distr}
Given a set $\Sigma$, a collection of subsets $\{\Sigma_i\subseteq\Sigma, i\in I\}$, where $I$ is an index set,
is called a distribution of $\Sigma$ if $\cup_{i \in I}\Sigma_i=\Sigma$.
\end{definition}

\begin{definition}[\textbf{projection}]\label{def:BA}
For a word $w \in \Sigma^{\infty} $ and a subset $S\subseteq \Sigma$, we denote by $w \!\upharpoonright_S$ the projection of $w$ onto $S$, which is obtained by erasing all symbols $\sigma$ in $w$ that do not belong to $\Sigma$.
For a language $L\subseteq \Sigma^{\infty}$ and a subset $S \subseteq \Sigma$, we denote by $L\!\upharpoonright_{S}$ the projection of $L$ onto $S$, which is given by $L\!\upharpoonright_{S}:= \{\omega\!\upharpoonright_S \ \mid \omega \in L\}$.
\end{definition}

\begin{definition}[\textbf{trace-closed language}]\label{definition:trace-equivalence}
Given a distribution $\{\Sigma_i\subseteq\Sigma, i\in I\}$ and $w, w' \in \Sigma^{\infty}$, we say that $w$ is trace-equivalent to $w'$ ($w \sim w'$
\footnote{Note that the trace-equivalence relation $\sim$ and class $[\cdot]$ are based on the given distribution $\{\Sigma_i\subseteq\Sigma, i\in I\}$. For simplicity of notations, we use  $\sim$ and $[\cdot]$ without specifying the distribution when there is no ambiguity.}
) if and only if $w \upharpoonright_{\Sigma_i} = w' \upharpoonright_{\Sigma_i}$, for all $i \in I$.
We denote by $[w]$ the trace-equivalence class of $w\in \Sigma^{\infty}$, which is given by $[w] := \{w' \in \Sigma^{\infty} \mid w\sim w'\}$.
A trace-closed language over a distribution $\{\Sigma_i\subseteq\Sigma, i\in I\}$ is a language $L$ such that for all $w \in L$, $[w] \subseteq L$.
\end{definition}

\begin{definition}[\textbf{product of languages}]\label{definition:prod_lang}
Given a distribution $\{\Sigma_i\subseteq\Sigma, i\in I\}$,
the product of a set of languages $L_i$ over $\Sigma_i$ is denoted by
$\parallel_{i\in I} L_i$ and defined as
$\parallel_{i\in I} L_i:=\{w \in \Sigma^{\infty} \mid w\!\upharpoonright_{\Sigma_{i}} \in L_i\textrm{ for all } i \in I\}.$
\end{definition}

\begin{proposition}\label{prop:trace-word->product}
Given a distribution $\{\Sigma_i\subseteq\Sigma, i\in I\}$ of $\Sigma$ and a
word $w \in \Sigma^{\infty}$, we have $[w] = \parallel_{i\in I} \{w\upharpoonright_{\Sigma_i}\}$.
\end{proposition}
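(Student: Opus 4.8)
The plan is to prove the identity by unwinding the definitions of the trace-equivalence class (Def.~\ref{definition:trace-equivalence}) and of the product of languages (Def.~\ref{definition:prod_lang}) and then checking the two inclusions. The key observation is that $\parallel_{i\in I}\{w\upharpoonright_{\Sigma_i}\}$ is a product of \emph{singleton} languages, so for any $w'\in\Sigma^{\infty}$ the membership condition ``$w'\upharpoonright_{\Sigma_i}\in\{w\upharpoonright_{\Sigma_i}\}$ for all $i\in I$'' collapses to ``$w'\upharpoonright_{\Sigma_i}=w\upharpoonright_{\Sigma_i}$ for all $i\in I$'', which is verbatim the condition $w\sim w'$ defining membership of $w'$ in $[w]$. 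In other words, once the singleton reduction is made explicit, both sides of the claimed equality are literally the set $\{\,w'\in\Sigma^{\infty}\mid w'\upharpoonright_{\Sigma_i}=w\upharpoonright_{\Sigma_i}\text{ for all }i\in I\,\}$.

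Concretely, for the inclusion $[w]\subseteq\parallel_{i\in I}\{w\upharpoonright_{\Sigma_i}\}$ I would take an arbitrary $w'\in[w]$, use Def.~\ref{definition:trace-equivalence} to get $w'\upharpoonright_{\Sigma_i}=w\upharpoonright_{\Sigma_i}$ for every $i\in I$, conclude that $w'\upharpoonright_{\Sigma_i}\in\{w\upharpoonright_{\Sigma_i}\}$ for every $i\in I$, and invoke Def.~\ref{definition:prod_lang} (noting $w'\in\Sigma^{\infty}$) to place $w'$ in the product. For the reverse inclusion I would take $w'\in\parallel_{i\in I}\{w\upharpoonright_{\Sigma_i}\}$, read off from Def.~\ref{definition:prod_lang} that $w'\in\Sigma^{\infty}$ and $w'\upharpoonright_{\Sigma_i}\in\{w\upharpoonright_{\Sigma_i}\}$ for all $i\in I$, use that each $\{w\upharpoonright_{\Sigma_i}\}$ is a singleton to obtain $w'\upharpoonright_{\Sigma_i}=w\upharpoonright_{\Sigma_i}$ for all $i\in I$, and finally invoke symmetry of equality (hence of $\sim$) to get $w\sim w'$, so that $w'\in[w]$.

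I do not anticipate a genuine obstacle: the statement is essentially a restatement of the definitions, and the only points that deserve to be spelled out rather than taken for granted are (i) the singleton reduction $v\in\{u\}\iff v=u$ and (ii) the symmetry of the trace-equivalence relation used in the second inclusion. It is also worth remarking, to preempt a reader looking for a hidden use of hypotheses, that the covering property $\cup_{i\in I}\Sigma_i=\Sigma$ from Def.~\ref{def:distr} is part of the standing assumptions but plays no role in this particular argument, since both sides are characterized by exactly the same family of per-component constraints indexed by $I$.
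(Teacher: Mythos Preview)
Your proposal is correct and follows essentially the same approach as the paper's own proof: both establish the two inclusions by directly unwinding Def.~\ref{definition:trace-equivalence} and Def.~\ref{definition:prod_lang}, using the fact that membership in the singleton $\{w\upharpoonright_{\Sigma_i}\}$ is the same as equality with $w\upharpoonright_{\Sigma_i}$. Your additional remarks on symmetry of $\sim$ and the irrelevance of the covering hypothesis are accurate side notes that the paper omits.
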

\begin{proof}
For all words $w' \in [w]$, according to Def. \ref{definition:trace-equivalence}, $w' \upharpoonright_{\Sigma_i} = w \upharpoonright_{\Sigma_i}, \forall i \in I$. According to Def. \ref{definition:prod_lang}, since $w' \in \Sigma^{\infty}$ and $w' \upharpoonright_{\Sigma_i} = w \upharpoonright_{\Sigma_i}, \forall i \in I$, then $w' \in \parallel_{i\in I} \{w\upharpoonright_{\Sigma_i}\}$. Hence, $[w]\subseteq \parallel_{i\in I} \{w\upharpoonright_{\Sigma_i}\}$.

For all words $w'\in\parallel_{i\in I} \{w\upharpoonright_{\Sigma_i}\}$, according to Def. \ref{definition:prod_lang},
$w\upharpoonright_{\Sigma_i}=w'\upharpoonright_{\Sigma_{i}}$. According to Def. \ref{definition:trace-equivalence}, $w'\sim w$, which implies $w' \in [w]$. Hence, $\parallel_{i\in I} \{w\upharpoonright_{\Sigma_i}\} \subseteq [w]$.
Combined with the fact that $[w]\subseteq \parallel_{i\in I} \{w\upharpoonright_{\Sigma_i}\}$, we have $[w] =\parallel_{i\in I} \{w\upharpoonright_{\Sigma_i}\}$.
\end{proof}

We refer to \cite{Mazur95, Kwi89} for more definitions and properties in concurrency theory.

\section{Problem Formulation and Approach}\label{sec:prob_form}

Assume we have a team of agents $\{i\mid i \in I\}$, where $I$ is a label set.
We use an LTL formula over a set of properties $\Sigma$ to describe a global task for the team. We model the capabilities of the agents to satisfy properties as a distribution $\{\Sigma_i\subseteq\Sigma, i\in I\}$, where $\Sigma_i$ is the set of properties that  can be satisfied by agent $i$. A property can be {\it shared} or {\it individual}, depending on whether it belongs to multiple agents or to a single agent.
Shared properties are properties that need to be satisfied by several agents simultaneously.

We model each agent as a transition system:
\begin{equation}\label{eqn:robot_i}
\mathcal T_i=(S_i,s_{0_i},\rightarrow_i, \Sigma_i, h_i), i\in I.
\end{equation}
In other words, the dynamics of agent $i$ are restricted by the transition relation $\rightarrow_{i}$.
The output $h_{i}(s_{i})$ represents the property  that is valid (true) at state $s_{i} \in S_{i}$. An individual property $\sigma$ is said to be satisfied if and only if the agent that owns $\sigma$ reaches state $s_{i}$ at which $\sigma$ is valid (\ie, $h_i(s_{i}) = \sigma$). A shared property is said to be satisfied if and only if all the agents sharing it enter the states where $\sigma$ is true simultaneously.

For example, $\lt_{i}$ can be used to model the motion capabilities of a robot (Khepera III miniature car) running in our urban-like environment (Fig. \ref{fig:platform}), where $S_{i}$ is a set of labels for the roads, intersections and parking lots and $\rightarrow_{i}$ shows how these are connected (\ie  $\rightarrow_{i}$ captures how robot $i$ can move among adjacent regions).
Note that these transitions are, in reality, enabled by low-level control primitives (see Sec. \ref{sec:casestudy}).
We assume that the selection of a control primitive at a region uniquely determines the next region. This corresponds to a deterministic (control) transition system, in which each trajectory of  $\lt_{i}$ can be implemented by the robot in the environment by using the sequence of corresponding motion primitives. For simplicity of notation, since the robot can deterministically choose a transition, we omit the control inputs traditionally associated with transitions. Furthermore, distribution $\{\Sigma_i\subseteq\Sigma, i\in I\}$ can be used to capture the capabilities of the robots to service requests and task cooperation requirements (\eg some of the requests can be serviced by one robot, while others require the collaboration of two or more robots).  The output map $h_{i}$ indicates the locations of the requests. A robot services a request by visiting the region at which this request occurs. A shared request occurring at a given location requires multiple robots to be at this location at the same time.

\begin{definition}[\textbf{cc-strategy}]
A finite (infinite) trajectory $r^c_i =  s_{i}(0)s_{i}(1)\ldots s_{i}(n)$ ($s_{i}(0)s_{i}(1)\ldots$) of $\lt_{i}$ defines a control and communication (cc) strategy for agent $i$ in the following sense:
(i) $s_{i}(0) = s_{0_i}$, (ii) an entry $s_{i}(k)$ means that state $s_{i}(k)$ should be visited, (iii) an entry $s_{i}(k)$, where $h_i(s_{i}(k))$ is a shared property, triggers a communication protocol:
while at state $s_{i}(k)$, agent $i$ broadcasts the property $h_i(s_{i}(k))$ and listens for broadcasts of $h_i(s_{i}(k))$ from all other agents that share the property with it; when they are all received, $h_i(s_{i}(k))$ is satisfied and then agent $i$ transits to the next state.
\end{definition}

Because of the possible parallel satisfaction of individual properties, and because the durations of the transitions are not known,
 a set of cc-strategies $\{r^{c}_i,i \in I\}$ can produce multiple sequences of properties satisfied by the team. We use products of languages (Def. \ref{definition:prod_lang}) to capture all the possible behaviors of the team.

\begin{definition}[\textbf{global behavior of the team}]
Given a set of cc-strategies $\{r^{c}_i, i\in I\}$, we denote
\begin{equation}\label{eqn:team}
\la_{team} (\{r^{c}_i, i\in I\}) :=\  \parallel_{i\in I} \{w_{i}\}
\end{equation}
as the set of all possible sequences of properties satisfied by the team
while the agents follow their individual cc-strategies $r^{c}_i$, where $w_{i}$ is the word of $\lt_i$ generated by
$r^{c}_i$.
\end{definition}

For simplicity of notation, we usually denote $\la_{team} (\{r^{c}_i, \\i\in I\})$ as $\la_{team}$ when there is no ambiguity.

 \begin{definition}[\textbf{satisfying set of cc-strategies}]
 A set of cc-strategies $\{r^{c}_i, i\in I\}$ satisfies a specification given as an LTL formula $\phi$ if and only if $\la_{team}\neq \emptyset$ and $\la_{team} \subseteq \mathcal L(\lb_{\phi})$.
\end{definition}

\begin{remark}\label{remark:deadlock}
For a set of cc-strategies, the corresponding $\la_{team}$ could be an empty set by the definition of product of languages (since there may not exist a word $w\in \Sigma^{\infty}$ such that $w\upharpoonright_{\Sigma_i}=w_{i}$ for all $i\in I$). In practice, this case corresponds to a deadlock scenario where one (or more) agent waits indefinitely for others to enter the states at which a shared property $\sigma$ is true. For example, if one of these agents is not going to broadcast $\sigma$ but some other agents are waiting for the broadcasts of $\sigma$, then all those agents will be stuck in a ÒdeadlockÓ state and wait indefinitely. When such a deadlock scenario occurs, the behaviors of the team do not satisfy the specification.
\end{remark}

We are now ready to formulate the main problem:

\begin{problem}\label{problem:main}
Given a team of agents represented by $\lt_i$, $i\in I$, a global specification $\phi$ in the form of an LTL formula over $\Sigma$, and
a distribution $\{\Sigma_i\subseteq\Sigma, i\in I\}$, find a satisfying set of individual cc-strategies $\{r^{c}_{i}, i\in I\}$.
\end{problem}

Our approach to solve Prob. \ref{problem:main} can be divided into two major parts as shown in Fig \ref{fig:approach}: checking distributability and ensuring implementability.
Specifically, we (i) check whether the global specification can be distributed among the agents while accounting for their capabilities to satisfy properties, and (ii) make sure that the individual cc-strategies are feasible for the agents. For (i), we make the connection between distributability of global specifications and closure properties of temporal logic formulas \cite{Peled98}.  Specifically, we check whether the language satisfying the global specification $\phi$ is trace-closed; if yes, then it is distributable; otherwise, a solution cannot be found (see Sec. \ref{sec:sub:dist}). Therefore, our approach is conservative, in the sense that we might not find a solution even if one exists.  For (ii), we construct an implementable automaton by adapting automata-based techniques \cite{Clarke99, thiaTech} to obtain all the possible sequences of properties that could be satisfied by the team, while considering the dynamics and capabilities of the agents (Sec. \ref{sec:sub:imple} and \ref{sec:sub:product}).  Finally, an arbitrary word from the intersection of the trace-closed language satisfying $\phi$ and the language of the implementable automaton is selected to synthesize the individual cc-strategies for the agents.

\begin{figure}\center
\includegraphics[scale=0.5]{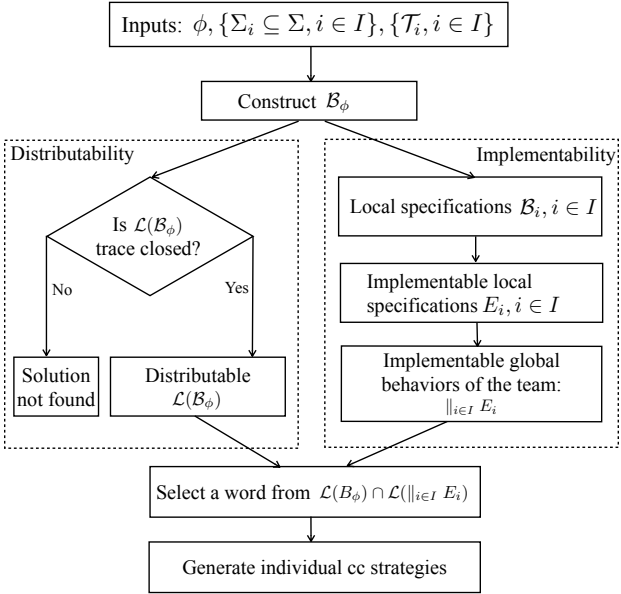}
 \caption{Schematic representation of our approach to Prob. \ref{problem:main}. } \label{fig:approach}
\end{figure}

\section{Synthesis of individual cc-strategies}\label{sec:synthesis}

\subsection{Checking Distributability}\label{sec:sub:dist}

We begin with the conversion of the global specification $\phi$ over $\Sigma$ to a \BA \
$\lb_{\phi} = (Q, Q^{in}, \Sigma, \delta, F)$ (Def. \ref{def:BA}), which accepts exactly the language satisfying $\phi$ (using LTL2BA \cite{gastin2001fast}).
We need to find a local word $w_{i}$ for each agent $i$ such that (i) all possible sequences of properties satisfied by the team while each agent executes its local word satisfy the global specification (\ie included in $\la(\lb_{\phi})$), and (ii) each local word $w_{i}$ can be implemented by the corresponding agent (which will be detailed in the following sub-sections).

 Given the global specification $\la(\lb_{\phi})$ and the distribution $\{\Sigma_i\subseteq\Sigma, i\in I\}$,  we make the important observation that a trace-closed language (Def. \ref{definition:trace-equivalence}) is sufficient to find a set of local words satisfying the first condition.  Formally, we have:

\begin{proposition}\label{prop:trace->solution}
Given a language $L \subset \Sigma^{\infty}$ and a distribution $\{\Sigma_i\subseteq\Sigma, i\in I\}$, if $L$ is a trace-closed language and $w \in L$, then $\parallel_{i} \{w\upharpoonright_{\Sigma_i}\} \subseteq L$.
\end{proposition}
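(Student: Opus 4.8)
The plan is to reduce Proposition~\ref{prop:trace->solution} to the two facts already established, namely Proposition~\ref{prop:trace-word->product} and the definition of a trace-closed language (Def.~\ref{definition:trace-equivalence}). The key observation is that the product $\parallel_{i\in I}\{w\!\upharpoonright_{\Sigma_i}\}$ appearing in the statement is, by Proposition~\ref{prop:trace-word->product}, exactly the trace-equivalence class $[w]$ of $w$ with respect to the given distribution. So the claim $\parallel_{i}\{w\!\upharpoonright_{\Sigma_i}\}\subseteq L$ is literally the statement $[w]\subseteq L$.

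First I would invoke Proposition~\ref{prop:trace-word->product} with the distribution $\{\Sigma_i\subseteq\Sigma, i\in I\}$ and the word $w$ to get the identity $[w]=\parallel_{i\in I}\{w\!\upharpoonright_{\Sigma_i}\}$. Next, since $w\in L$ and $L$ is assumed trace-closed over this same distribution, Def.~\ref{definition:trace-equivalence} gives directly that $[w]\subseteq L$. Combining the two, $\parallel_{i\in I}\{w\!\upharpoonright_{\Sigma_i}\}=[w]\subseteq L$, which is the desired conclusion.

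There is essentially no obstacle here: the proposition is a one-line corollary of the preceding proposition and the definition, and the only thing to be careful about is that the trace-equivalence relation and the product are taken with respect to the same distribution $\{\Sigma_i\subseteq\Sigma, i\in I\}$ — which is exactly the hypothesis of the statement, so the footnote's caveat about ambiguity of $\sim$ and $[\cdot]$ does not cause trouble. If one wanted a self-contained argument without citing Proposition~\ref{prop:trace-word->product}, one could instead argue directly: take any $w'\in\parallel_{i}\{w\!\upharpoonright_{\Sigma_i}\}$, so $w'\!\upharpoonright_{\Sigma_i}=w\!\upharpoonright_{\Sigma_i}$ for all $i\in I$ by Def.~\ref{definition:prod_lang}, hence $w'\sim w$ by Def.~\ref{definition:trace-equivalence}, hence $w'\in[w]$; since $w\in L$ and $L$ is trace-closed, $[w]\subseteq L$, so $w'\in L$. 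Either way the proof is immediate; I would present the short version that reuses Proposition~\ref{prop:trace-word->product}.
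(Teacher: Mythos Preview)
Your proof is correct and follows exactly the same approach as the paper, which simply states that the result ``follows from Prop.~\ref{prop:trace-word->product} and the definition of the trace-closed language.'' Your write-up just makes explicit the two steps the paper leaves implicit.
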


\begin{proof}
Follows from Prop. \ref{prop:trace-word->product} and the definition of the trace-closed language.
\end{proof}

Thus, our approach aims to check whether $\la(\lb_{\phi})$ is trace-closed. If the answer is positive, by Prop. \ref{prop:trace->solution}, an arbitrary word from $\la(\lb_{\phi})$ can be used to generate the suitable set of local words by projecting this word onto $\Sigma_i$. The algorithm (adapted from \cite{Peled98}) to check if $\mathcal L(\lb_{\phi})$ is trace-closed can be viewed as a process to construct a \BA \  $\mathcal A$, such that each word accepted by  $\mathcal A$ represents a pair of words $w$ and $w'$, such that $w \in \la (\lb_{\phi})$, $w' \notin \la (\lb_{\phi})$, and $w \sim w'$ (\ie $w$ is trace-equivalent to $w'$).  Thus, if $\mathcal A$ has a non-empty language, $\la (\lb_{\phi})$ is not trace-closed.

To obtain $\mathcal A$, we first construct a \BA, denoted by $\mathcal C$, to capture all pairs of trace-equivalent infinite words over $\Sigma$. Given the distribution $\{\Sigma_i\subseteq\Sigma, i\in I\}$, we define a relation  $\mathbb I$ such that $(\sigma, \sigma') \in\mathbb I$ if there does not exist $\Sigma_{i}$, $i\in I$ such that  $\sigma, \sigma' \in \Sigma_{i}$.  Formally, $\mathcal C$ is defined as
\begin{equation}\label{eqn:automatonC}
\lc = (Q_{\lc}, \{q_{\lc_{0}}\}, \Sigma_{\lc}, \delta_{\lc}, F_{\lc}),
\end{equation}
where $\Sigma_{\lc} = \mathbb I \cup \{(\sigma, \sigma)\mid \sigma \in \Sigma\}$ and $F_{\lc} = \{q_{\lc_{0}}\}$.
The transition function $\delta_{\lc}$ is defined as
(a)  for all $\sigma \in \Sigma$, there exists $q_{\lc_{0}} = \delta_{\lc}( q_{\lc_{0}}, (\sigma, \sigma))$,
and (b) for all $(\sigma, \sigma') \in \mathbb I$, there exists a state $q_{\lc} \neq q_{\lc_{0}}$ such that $q_{\lc} = \delta_{\lc} (q_{\lc_{0}}, (\sigma, \sigma')) $ and $q_{\lc_{0}} =  \delta_{\lc} (q_{\lc}, (\sigma', \sigma))$.
 In other words, to obtain $\lc$, we first generate the initial state and then add a new state and the corresponding transitions for every member of $\mathbb I$. Thus, the number of states is $|\mathbb I | + 1$. A simple example to illustrate the construction of $\lc$ is shown in Fig. \ref{fig:C}.

\begin{figure}\center
\includegraphics[scale=3.4]{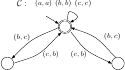}
\caption{
\BA \ $\lc$ (\ref{eqn:automatonC}) for the case when $\Sigma = \{a,b,c\}$, $\Sigma_1= \{a,b\}$, and $\Sigma_2= \{a,c\}$. Relation $\mathbb I$ is given by $\mathbb I = \{(b,c), (c,b)\}$.
}\label{fig:C}
\end{figure}

Next, we construct a \BA \ $\mathcal A_{1}$ to accommodate words from $\mathcal L(\lb_{\phi})$.  A word $w_{\mathcal A_{1}}$ accepted by $\mathcal A_{1}$ is a sequence $(\sigma_{1},\sigma'_{1})(\sigma_{2}, \sigma'_{2})\ldots$. We use $w_{\mathcal A_{1}}|_{1}$ and $w_{\mathcal A_{1}}|_{2}$ to denote the sequence $\sigma_{1}\sigma_{2}\ldots$ and $\sigma'_{1}\sigma'_{2}\ldots$, respectively. For each word $w_{\mathcal A_{1}}$ accepted by $\mathcal A_{1}$, we have $w_{\mathcal A_{1}}|_{1} \in \mathcal L(\lb_{\phi})$ and $w_{\mathcal A_{1}}|_{2} \in \Sigma^{\omega}$. Similarly, we construct another \BA \ $\mathcal A_{2}$ to capture words that do not belong to $\mathcal L(\lb_{\phi})$, \ie  for each word $w_{\mathcal A_{2}} \in \la(\mathcal A_{2})$, $w_{\mathcal A_{2}}|_{1} \in \Sigma^{\omega}$ and $w_{\mathcal A_{2}}|_{2} \notin \mathcal L(\lb_{\phi})$ always hold.

Finally, we produce the \BA \ $\mathcal A$ such that $\mathcal L (\mathcal A) =  \mathcal L (\mathcal C)  \cap \mathcal L (\mathcal A_{1})  \cap \mathcal L (\mathcal A_{2}) $ by taking the intersections of the \BAs. According to \cite{Peled98}, $\mathcal L(\lb_{\phi})$ is trace-closed if and only if $\la(\mathcal A) = \emptyset$.
The construction of the intersection of several  \BAs \ is given in \cite{vardi1994reasoning}.
We summarize this procedure in Alg. \ref{alg:trace}.

\begin{algorithm}
\caption{: Check if $\la(\lb)$ is trace-closed}
\label{alg:trace}
\begin{algorithmic}[1]
\INPUT{A \BA \ $\lb = (Q, Q^{in}, \Sigma, \delta, F)$ (Def. \ref{def:BA})} and a distribution $\{\Sigma_i\subseteq\Sigma, i\in I\}$
\OUTPUT{Yes or No}
\STATE Construct $\lc$ as defined in (\ref{eqn:automatonC})
\STATE Construct $\mathcal A_{1} = (Q, Q^{in}, \Sigma_{\mathcal A_1}, \delta_{\mathcal A_1}, F)$, where $\Sigma_{\mathcal A_1} \subseteq \Sigma \times\Sigma$ and $ \delta_{\mathcal A_1}: Q \times \Sigma_{\mathcal A_1} \rightarrow 2^{Q}$ is defined as $q' \in  \delta_{\mathcal A_1}(q, (\sigma_{1}, \sigma_{2})) $ if and only if $q' \in  \delta(q, \sigma_{1})$
\STATE Construct $\mathcal A_{2} = (Q, Q^{in}, \Sigma_{\mathcal A_2}, \delta_{\mathcal A_2}, F)$, where $\Sigma_{\mathcal A_2} \subseteq \Sigma \times\Sigma$ and $ \delta_{\mathcal A_2}: Q \times \Sigma_{\mathcal A_2}  \rightarrow 2^{Q}$ is defined as $q' \in  \delta_{\mathcal A_2}(q, (\sigma_{1}, \sigma_{2})) $ if and only if $q' \in  \delta(q, \sigma_{2})$.
\STATE Construct $\mathcal A$ such that  $\mathcal L (\mathcal A) =  \mathcal L (\mathcal C)  \cap \mathcal L (\mathcal A_{1})  \cap \mathcal L (\mathcal A_{2})$
\STATE \textbf{if} $\la(\mathcal A) = \emptyset$ \textbf{return} Yes \textbf{else} \textbf{return} No
\end{algorithmic}
\end{algorithm}

\subsection{Implementable Local  Specification}\label{sec:sub:imple}

In the case that $\la(\lb_{\phi})$ is trace-closed, the global specification is distributable among the agents. We call $\la(\lb_{\phi})\upharpoonright_{\Sigma_i}$
the ``local'' specification for agent $i$ because of the following proposition.

\begin{proposition}
If a set of cc-strategies $\{r^{c}_{i}, i\in I\}$ is a solution to Prob. \ref{problem:main}, then the corresponding local words $w^{c}_{i}$ are included in $ \mathcal L(\lb_{\phi})\upharpoonright_{\Sigma_i}$ for all $i\in I$.
\end{proposition}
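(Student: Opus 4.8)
The plan is to unfold the relevant definitions; no deep argument is needed. First I would use the hypothesis that $\{r^{c}_{i}, i\in I\}$ solves Prob.~\ref{problem:main}, hence is a satisfying set of cc-strategies, which by definition means that $\la_{team}\neq\emptyset$ and $\la_{team}\subseteq\la(\lb_{\phi})$. The non-emptiness clause lets me fix a witness word $w\in\la_{team}$ to work with.

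Next I would invoke the definition of the global behavior of the team, $\la_{team}=\parallel_{i\in I}\{w^{c}_{i}\}$ (see (\ref{eqn:team})), where $w^{c}_{i}$ is the word of $\lt_{i}$ generated by $r^{c}_{i}$, together with the definition of the product of languages (Def.~\ref{definition:prod_lang}). Applied to the chosen $w$, these give $w\upharpoonright_{\Sigma_i}\in\{w^{c}_{i}\}$, that is $w\upharpoonright_{\Sigma_i}=w^{c}_{i}$, for every $i\in I$.

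Then, since $w\in\la_{team}\subseteq\la(\lb_{\phi})$, we have $w\in\la(\lb_{\phi})$, and by the definition of the projection of a language onto $\Sigma_i$ (Def.~\ref{def:BA}), $w\upharpoonright_{\Sigma_i}\in\la(\lb_{\phi})\upharpoonright_{\Sigma_i}$. Combining this with the identity $w\upharpoonright_{\Sigma_i}=w^{c}_{i}$ established in the previous step yields $w^{c}_{i}\in\la(\lb_{\phi})\upharpoonright_{\Sigma_i}$ for all $i\in I$, which is exactly the claim.

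The only point that requires care — and the closest thing to an obstacle in an otherwise purely definitional argument — is that the proof genuinely relies on the non-emptiness clause $\la_{team}\neq\emptyset$ in the definition of a satisfying set of cc-strategies: without an explicit $w\in\la_{team}$ there is no word to project, and, as Remark~\ref{remark:deadlock} emphasizes, $\la_{team}$ can indeed be empty (the deadlock scenario), in which case the local words $w^{c}_{i}$ are still well defined but the bridge to $\la(\lb_{\phi})$ collapses. I would therefore state explicitly at which step non-emptiness is used.
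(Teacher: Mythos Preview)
Your proposal is correct and follows essentially the same route as the paper's proof: pick a witness $w$ in the nonempty $\la_{team}=\parallel_{i\in I}\{w^{c}_{i}\}\subseteq\la(\lb_{\phi})$, use the product definition to get $w\upharpoonright_{\Sigma_i}=w^{c}_{i}$, and then project $w\in\la(\lb_{\phi})$ onto $\Sigma_i$. Your extra remark that non-emptiness is genuinely needed is a nice addition but does not change the argument.
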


\begin{proof}
If a set of cc-strategies $\{r^{c}_{i}, i\in I\}$ is a solution to Prob. \ref{problem:main}, then we have $||_{i\in I} \{w^{c}_{i}\} \subseteq \mathcal L(\lb_{\phi})$ and $||_{i\in I} \{w^{c}_{i}\} \neq \emptyset$.  We can find a word $w_{1} \in ||_{i\in I} \{w^{c}_{i}\} \subseteq \mathcal L(\lb_{\phi})$, such that $w^{c}_{i}=w_{1}\upharpoonright_{\Sigma_i}$ for all $i\in I$.  Since $w^{c}_{i}=w_{1}\upharpoonright_{\Sigma_i}$ and $w_{1}\upharpoonright_{\Sigma_i}\in \mathcal L(\lb_{\phi})\upharpoonright_{\Sigma_i}$, we have $w^{c}_{i}\in \mathcal L(\lb_{\phi})\upharpoonright_{\Sigma_i}$.
\end{proof}

Given the agent model $\lt_i$, some of the local words might not be feasible for the agent. Therefore, we aim to construct the ``implementable local'' specification for each agent; namely, it captures all the words of $ \mathcal L(\lb_{\phi})\upharpoonright_{\Sigma_i}$ that can be implemented by the agent.
To achieve this, we first produce an automaton that accepts exactly the local specification.

Note that the projection of the language satisfying the global specification that includes only infinite words on a local alphabet $\Sigma_{i}$ might contain finite words. For example, given an infinite word $w = baaa\ldots$, if $a \notin \Sigma_{i}$, the projection of this word is $b$. Therefore, the local specification for each agent might have both finite and infinite words.  To address this, we employ a {\it mixed \BA} introduced in \cite{thiaTech}. The mixed \BA \ is similar to the standard \BA \ defined in Def. \ref{def:BA}, except for it has two different types of accepting states: finitary and infinitary accepting states. Formally, we define the mixed \BA \ as
 \begin{equation}\label{def:mix}
 \lb^{M} := (Q, Q^{in}, \Sigma, \delta, F, F^{fin})
\end{equation}
where $F$ stands for the set of infinitary accepting states and $F^{fin}$ represents the set of finitary  accepting states.  The mixed \BA \ accepts infinite words by using the set of infinitary accepting states, with the same acceptance condition as defined in Def. \ref{def:BA}. A finite run $r^{fin} = q(0)q(1)\ldots q(n)$ of $\lb^{M}$ over a finite word $w^{fin} = w(0)w(1)\ldots w(n)$ satisfies $q(0) \in Q^{in}$ and $q(i+1) \in \delta(q(i), w(i))$, for all $0\leq i < n$. $\lb^{M}$ accepts a finite word $w^{fin}$ if and only if the finite run $r^{fin}$ over $w^{fin}$ satisfying $ q(n) \in F^{fin}$. We call a finitary  accepting state $q\in F^{fin}$ terminal if and only if no transition starts from $q$. We assume that all the  finitary accepting states are terminal in this paper.
An algorithm to obtain a mixed \BA \ $\lb_i = (Q_i, Q^{in}_i, \Sigma_i, \delta_i, F_i, F^{fin}_i)$ which accepts $\la(\lb_{\phi})\upharpoonright_{\Sigma_i}$ is summarized in Alg. \ref{alg:projection}.

\begin{algorithm}
\caption{: Construct $\lb_i$ where $\la(\lb_i) = \la(\lb) \upharpoonright_{\Sigma_{i}}$}
\label{alg:projection}
\begin{algorithmic}[1]
\INPUT{$\lb = (Q, Q^{in}, \Sigma, \delta, F)$ and a subset $\Sigma_i \subseteq \Sigma$}
\OUTPUT{$\lb_i =  (Q_{i},Q^{in}_{i},\Sigma_{\lb_{i}},\delta_{i}, F_{i}, F^{fin}_i)$ }

\STATE  Construct  $\lb^{\epsilon}_{i} = (Q^{\epsilon}_{i},Q^{{\epsilon}_{in}}_{i},\Sigma^{\epsilon}_i,\delta^{\epsilon}_{i}, F^{\epsilon}_{i})$, where $Q^{\epsilon}_{i} = Q$, $Q^{{\epsilon}_{in}}_{i} = Q^{in}$, $\Sigma^{\epsilon}_i = \Sigma_i \cup \{\epsilon\}$,  $F^{\epsilon}_{i} = F$ and
$\delta^{\epsilon}_{i}$ is defined as $ q' \in \delta^{\epsilon}_{i}(q, \sigma)$ iff $q' \in \delta(q, \sigma)$ and $\sigma \in \Sigma_i$, and $q' \in \delta^{\epsilon}_{i}(q, \epsilon)$ iff $\exists \sigma \in \Sigma\backslash\Sigma_i$ {\it s.t.,} $q' \in \delta(q, \sigma)$.
\STATE For all states $q$ of  $\lb^{\epsilon}_{i}$, we take the $\epsilon$-closure \cite{automata-book07} of $q$, denoted as $eclose(q)$.
\STATE Build $\lb_{i} = (Q_{i},Q^{in}_{i},\Sigma_{\lb_{i}},\delta_{i}, F_{i}, F^{fin}_i)$, where $Q_{i} = Q^{\epsilon}_{i}$, $Q^{in}_{i} = Q^{{\epsilon}_{in}}_{i}$, $\Sigma_{\lb_{i}} = \Sigma_i$,   $\delta_{i}$ is defined as $q' \in \delta_{i}(q, \sigma)$, iff $\exists q'' \in eclose(q),$  {\it s.t.,} $q' \in \delta^{\epsilon}_{i}(q'', \sigma)$, $F_{i} = F^{\epsilon}_{i}$ and  $F^{fin}_{i} = \emptyset$.
\STATE Obtain $F^{fin}_i$ by adding a new state $q^{fin}$ to $F^{fin}_i$
for each $q \in F_{i}$ where a loop $q \xrightarrow{\epsilon} q_{1}  \xrightarrow{\epsilon} q_{2} \ldots\xrightarrow{\epsilon} q$ in $\lb^{\epsilon}_{i}$  exists
\STATE Add $F^{fin}_i$  to $Q_i$.
\STATE For each state $q^{fin} \in F^{fin}_i$, we have  $q^{fin} \in \delta_{i}(q', \sigma)$ if and only if the state's corresponding state of $q \in F_i$ satisfying $q \in \delta_{i}(q', \sigma)$
\RETURN $\lb_{i} = (Q_{i},Q^{in}_{i},\Sigma_{\lb_{i}},\delta_{i}, F_{i}, F^{fin}_i)$
\end{algorithmic}
\end{algorithm}

\begin{proposition}\label{prop:projection}
The language of the mixed \BA \ $\lb_i = (Q_i, Q^{in}_i, \Sigma_i, \delta_i, F_i, F^{fin}_i)$ constructed in Alg. \ref{alg:projection} is equal to $\la(\lb_{\phi}) \upharpoonright_{\Sigma_i}$.
\end{proposition}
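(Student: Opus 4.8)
The plan is to show the two language inclusions $\la(\lb_i) \subseteq \la(\lb_\phi)\!\upharpoonright_{\Sigma_i}$ and $\la(\lb_\phi)\!\upharpoonright_{\Sigma_i} \subseteq \la(\lb_i)$ separately, and within each to treat infinite words and finite words as distinct cases, since the mixed \BA\ uses two different acceptance mechanisms. The main bookkeeping tool is the intermediate automaton $\lb_i^\epsilon$ from Step 1 of Alg.~\ref{alg:projection}, which is just $\lb_\phi$ with every transition labelled by a symbol of $\Sigma\setminus\Sigma_i$ relabelled as $\epsilon$ (and $\Sigma_i$-transitions kept). First I would record the elementary fact that a run $q(0)q(1)\ldots$ of $\lb_\phi$ over a word $w\in\Sigma^\omega$ is, after deleting the $\Sigma\setminus\Sigma_i$-labelled steps, exactly a run of $\lb_i^\epsilon$ whose non-$\epsilon$ labels spell $w\!\upharpoonright_{\Sigma_i}$, and conversely any $\epsilon$-run of $\lb_i^\epsilon$ lifts to a run of $\lb_\phi$ by replacing each $\epsilon$-step with some concrete $\sigma\in\Sigma\setminus\Sigma_i$ witnessing that $\epsilon$-transition. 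This is the workhorse bijection; it is routine but must be stated.

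For the infinite-word part, the claim is $\la(\lb_i^\epsilon) = \la(\lb_\phi)\!\upharpoonright_{\Sigma_i}$ restricted to $\Sigma_i^\omega$, and then that the $\epsilon$-closure construction in Steps 2--3 preserves the accepted $\omega$-language while removing $\epsilon$'s — this is the standard $\epsilon$-elimination for automata on infinite words, and I would cite \cite{automata-book07} / \cite{thiaTech} for it, noting that $F_i = F_i^\epsilon = F$ so the B\"uchi condition transfers. Care is needed on one subtle point: $\epsilon$-elimination for B\"uchi automata is only sound if one does not create an accepting run that loops forever through $\epsilon$-transitions only; that degenerate situation corresponds precisely to an infinite word $w$ of $\lb_\phi$ whose suffix uses only letters outside $\Sigma_i$, so that $w\!\upharpoonright_{\Sigma_i}$ is in fact \emph{finite}. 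This is exactly why Steps 4--6 exist: for each $q\in F_i$ lying on an all-$\epsilon$ cycle of $\lb_i^\epsilon$, a terminal finitary copy $q^{fin}$ is added with the same incoming $\Sigma_i$-transitions. So the finite-word part of the argument is: $w'\in\Sigma_i^*$ is a projection of some $w\in\la(\lb_\phi)$ iff there is a run of $\lb_\phi$ over some $w$ spelling $w'$ on its $\Sigma_i$-letters and then cycling through an accepting state using only $\Sigma\setminus\Sigma_i$-letters — iff $\lb_i^\epsilon$ has a run over $w'$ ending at some $q\in F$ that sits on an all-$\epsilon$ accepting cycle — iff, after the closure construction, $\lb_i$ has a finite run over $w'$ ending in the corresponding terminal state $q^{fin}\in F_i^{fin}$. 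I would prove this equivalence as a short chain, using the lifting/projection bijection at each step.

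The step I expect to be the main obstacle is this handling of the finite projections and the precise correspondence between ``infinite $\lb_\phi$-run with an accepting $\epsilon$-only tail'' and ``finite $\lb_i$-run ending in a terminal finitary state.'' Two things have to be checked carefully: (i) that every infinite accepting run of $\lb_\phi$ whose $\Sigma_i$-projection is finite does reach $F$ \emph{and then} stays on a set of states connected by $\Sigma\setminus\Sigma_i$-transitions forming a cycle through $F$ — here one uses that the run visits $F$ infinitely often while emitting no further $\Sigma_i$-symbols, so by finiteness of $Q$ some accepting state recurs via an $\epsilon$-cycle, which is what Step 4 detects; and (ii) that the new terminal states introduce no \emph{spurious} finite words — this holds because $q^{fin}$ inherits exactly the incoming transitions of $q$ and has no outgoing transitions, so a finite run reaching $q^{fin}$ is a genuine prefix of the kind just described. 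I would also note, as a sanity remark matching the text's assumption that finitary accepting states are terminal, that truncating at $q^{fin}$ loses nothing because any legitimate continuation would only re-derive a longer word already obtained by not truncating. Combining the $\omega$-case and the finite-case inclusions in both directions then yields $\la(\lb_i) = \la(\lb_\phi)\!\upharpoonright_{\Sigma_i}$, which is the proposition.
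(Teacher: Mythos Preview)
Your plan is essentially the paper's own route: both arguments pass through the intermediate $\epsilon$-automaton $\lb_i^\epsilon$, handle the infinite and finite parts of the projection separately, and identify the finite projections with accepting states lying on all-$\epsilon$ cycles. The only real difference is one of emphasis. The paper devotes most of its proof to an explicit induction on the length of a finite prefix $w$ showing that the reachable-state sets $\hd_i(Q^{in}_i,w)$ and $\hd_i^\epsilon(Q^{\epsilon_{in}}_i,w)$ coincide, and from this concludes that Step~3's $\epsilon$-elimination preserves the infinite language---precisely the step you propose to cite as standard. Conversely, the paper asserts in one line that ``by construction, $\lb^\epsilon_i$ accepts $\la(\lb)\!\upharpoonright_{\Sigma_i}$'' and treats the finite-word case in two sentences, whereas you spell out the run-lifting/projection correspondence between $\lb_\phi$ and $\lb_i^\epsilon$ and the pigeonhole argument that locates the accepting $\epsilon$-cycle. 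Both fillings are correct; your version is more evenly distributed across the two cases, while the paper's makes the $\epsilon$-closure bookkeeping fully explicit.
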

\begin{proof}
By construction, $\lb^{\epsilon}_{i}$ accepts $\la(\lb) \upharpoonright_{\Sigma_i}$. To prove the above proposition, we first prove the following statement:  $\lb_{i}$ obtained by Alg. \ref{alg:projection} accepts the same infinite language as $\lb^{\epsilon}_{i}$ does.
For the infinite language, we only need to consider $\lb_i$ constructed in step 3 of the algorithm since step 4, 5, and 6 are only related to the finite language. From now on, $\lb_i = (Q_i, Q^{in}_i, \Sigma_i, \delta_i, F_i, F^{fin}_i)$ refers to $\lb_i$ constructed in step 3.

We define $\hd_{i}(Q_{1}, w)$, $Q_{1} \subseteq Q_{i}$ inductively to represent a set of states that can be reached from $Q_{1}$ after taking $w= w(1)w(2)\ldots w(n)$ as inputs.
Formally, we define $\hd_{i}$ for a \BA's transition function $\delta_i$ by:

\textbf{Basis:} $\hd_{i}(Q_{1}, \epsilon) = Q_{1}$. That is, without reading any input symbols, we are only in the state we began in.

\textbf{Induction:} Suppose $w$ is of the form $w = xa$, where $a$ is the final symbol of $w$ and $x$ is the rest of $w$. Also suppose that $\hd_{i}(Q_{1}, x) =
\{q_{1}, q_{2}, \ldots, q_{k}\}$. Let
$$
\bigcup^{k}_{j = 1} \delta_{i}(\{q_{j}\}, a) = \{r_{1}, r_{2}, \ldots, r_{m}\}
$$
Then  $\hd_{i}(Q_{1}, w) =  \{r_{1}, r_{2}, \ldots, r_{m}\}$. Less formally, we compute  $\hd_{i}(Q_{1}, w)$ by first computing $\hd_{i}(Q_{1}, x)$, and then following any transition from any of these states that is labeled $a$.

Similarly, for the \BA \ with $\epsilon$-transitions, $\hd_{i}^{\epsilon}(Q_{2}, w)$, $Q_{2} \subseteq Q^{\epsilon}_{i}$, is defined to represent the set of states, which can be reached from the set of the states $Q_{2}$ after taking a sequence of transitions given the input sequence $w$, while accounting for the transitions that can be made spontaneously (\ie $\epsilon$-transitions).
With slight abuse of notation, we denote $\delta^{\epsilon}_{i}(Q_{2}, a) = \bigcup_{q\in Q_{2}} \delta^{\epsilon}_{i}(q,a)$.
Formally, we define $\hd_{i}^{\epsilon}$ for the transition function $\rightarrow^{\epsilon}_{\lb_{i}}$ of  a \BA \ with $\epsilon$-transitions as following:

\textbf{Basis:} $\hd_{i}^{\epsilon}(Q_{2}, \epsilon) = Q_{2}$.

\textbf{Induction:} Suppose $w$ is of the form $w = xa$. Also suppose that $\hd_{i}^{\epsilon}(Q_{2}, x) = \{q_{1}, q_{2}, \ldots, q_{k}\}$. Let
\begin{align*}
\bigcup^{k}_{j = 1} \hd_{i}^{\epsilon}(\{q_{j}\}, a) &= \bigcup^{k}_{j = 1} \delta_{i}^{\epsilon}(eclose(q_{j}), a) )\\
& = \{r_{1}, r_{2}, \ldots, r_{m}\}
\end{align*}

Then  $\hd_{i}^{\epsilon}(Q_{2}, w) =  \{r_{1}, r_{2}, \ldots, r_{m}\}$.
Less formally, we compute  $\hd_{i}^{\epsilon}(Q_{2}, w)$ by first computing $\hd_{i}^{\epsilon}(Q_{2}, x)$, then following any $\epsilon$-transition from any of these states, and finally following any transition from the reached states that is labeled $a$.

To prove the statement, what we prove first, by induction on $|w|$, where $w = w(1)w(2)\ldots w(n) \in \Sigma^*$, is that
\begin{equation}\label{eqn:reach}
\hd_{i}(Q^{in}_{i}, w) =  \hd_{i}^{\epsilon}(Q^{{\epsilon}_{in}}_{i}, w).
\end{equation}

\textbf{Basis:} Let $|w| = 0$; that is, $w = \epsilon$. By the basis definitions of $\hd_{i}$ and  $\hd_{i}^{\epsilon}$, $\hd_{i}(Q^{in}_{i}, w) = Q^{in}_{i}$ and $\hd_{i}^{\epsilon}(Q^{{\epsilon}_{in}}_{i}, w) = Q^{\epsilon_{in}}_{i}$. Since $Q^{in}_{i} = Q^{\epsilon_{in}}_{i} = Q_{i}$, (\ref{eqn:reach}) holds.

\textbf{Induction:} Let $w$ be of length $n+1$, and assume (\ref{eqn:reach}) for length $n$. Break $w$ as $w = xa$. Let the set of states in $Q_{i}$ be $\{q_{1}, q_{2}, \ldots, q_{k}\}$ and the set of states in $Q^{\epsilon}_{i}$ be $\{q^{\epsilon}_{1}, q^{\epsilon}_{2}, \ldots, q^{\epsilon}_{k}\}$, and $q^{\epsilon}_{j} = q_{j}$,  $1 \leq j \leq k$.

By the construction of $\rightarrow_{\lb_{i}}$, we have $q \in \delta_{i}(\{q_{j}\}, a) $ if and only if $q \in \delta^{\epsilon}_{i}(eclose(q_{j}), a)$. By definition, since
$\delta_{i}(\{q_{j}\}, a) =  \delta_{i}^{\epsilon}(eclose(q_{j}), a) )$ and $q^{\epsilon}_{j} = q_{j}$,  we have $\bigcup^{k}_{j = 1} \delta_{i}(\{q_{j}\}, a) = \bigcup^{k}_{j = 1} \hd_{i}^{\epsilon}(\{q^{\epsilon}_{j}\}, a)$. Therefore, we have $\hd_{i}(Q^{in}_{i}, w) =  \hd_{i}^{\epsilon}(Q^{{\epsilon}_{in}}_{i}, w)$.

When we observe that $\lb_{i}$ constructed in step 3 and $\lb^{\epsilon}_{i}$ accept an infinite word if and only if this word visits the accepting states $F_{i}$ and $F^{\epsilon}_{i}$ infinitely many time. Since $F_{i} = F^{\epsilon}_{i}$, and $\hd_{i}(Q^{in}_{i}, w) =  \hd_{i}^{\epsilon}(Q^{{\epsilon}_{in}}_{i}, w)$, we have a proof that the two \BAs \ accept the same infinite language.

Next, we consider the finite language. From now on, $\lb_i = (Q_i, Q^{in}_i, \Sigma_i, \delta_i, F_i, F^{fin}_i)$ refers to $\lb_i$ returned by the algorithm.  Note that a finite word is accepted by $\lb^{\epsilon}_{i}$  if and only if its corresponding run ends at one of the accepting states $q^{\epsilon} \in F^{\epsilon}_{i}$, such that there exists a loop starting from and ending at it, with only $\epsilon$-transitions. By the construction of $\lb_{i}$, for the state $q^{\epsilon}$, there exist two corresponding  states: $q \in F_{i}$ and $q^{fin} \in F^{fin}_{i}$. Note that a run over a word can reach $q^{fin}$ if and only if it can reach $q$.
Because of (\ref{eqn:reach}), a finite word, whose corresponding run on $\lb_{i}$ can reach $q \in  F_{i}$ and $q^{fin} \in F^{fin}_{i}$ if and only if its corresponding run on $\lb^{\epsilon}_{i}$ can reach $q^{\epsilon} \in F^{\epsilon}_{i}$, which implies that this finite word is accepted by both \BAs.
Hence, we have a proof that the two \BAs \ accept the same finite language. Since $\lb^{\epsilon}_{i}$ and $\lb_{i}$ have the same language, the proof is complete.
\end{proof}

Inspired from LTL model checking \cite{Clarke99}, we define a product automaton to obtain the implementable local specification. First, we extend the transition system $\lt_i$ with a dummy
state labelled as $Start$ that has a transition to the initial state $s_{0_i}$. The addition of this
dummy state is necessary in the case that the initial state already satisfies partially the local specification.
Let $\widehat{\lt_i}$ be the extended finite transition system, then 
\begin{equation}\label{eqn:hatT}
\widehat{\lt_i}=(\widehat{S_i},\widehat{s_{0_i}},\widehat{\rightarrow_i}, \widehat{\Sigma}, \widehat{h_i})
\end{equation}
 where $\widehat{S_i} = S_i \cup \{Start\}$, $\widehat{s_{0_i}} = Start$, 
$\widehat{\rightarrow_i}= \rightarrow_i \cup \rightarrow_s$ where $\rightarrow_s$ is defined as $Start \rightarrow_s s_{0_i}$, $\widehat{\Sigma} = \Sigma$ and $\widehat{h_i} $ is the same output map as $h_i$ but extended by mapping the $Start$ state to a dummy observation.
Note that $\widehat{\lt_i}$ and ${\lt_i}$ generate the same language.

Now, consider the transition system $\widehat{\lt_i}$ that describes the
dynamics of agent $i$ and $\lb_i$ that represents the local specification for agent $i$. 
The following product automaton captures all the words in $\la(\lb_{i})$ that can be generated by agent $i$.

\begin{definition}\label{definition:PA}
The product automaton $E_i = \widehat{\lt_{i}} \otimes \lb_{i}$ between a TS $\widehat{\lt_i}=(\widehat{S_i},\widehat{s_{0_i}},\widehat{\rightarrow_i}, \widehat{\Sigma}, \widehat{h_i})$
and a mixed \BA \ $\lb_{i} = (Q_{i},Q^{in}_{i},\Sigma_{\lb_{i}},\delta_{i}, F_{i}, F^{fin}_{i})$, is a mixed  \BA \  $E_i = (Q_{E_i},Q^{in}_{E_i},\Sigma_{E_{i}}, \delta_{E_i},F_{E_i}, F^{fin}_{E_i})$, consisting of
\begin{itemize}
\item  a set of states $Q_{E_i} = \widehat{S_i} \times Q_{i}$;
\item a set of initial states $Q^{in}_{E_i} = \widehat{s_{0_i}} \times Q^{in}_{i}$;
\item a set of inputs $\Sigma_{E_i} = \Sigma_{\lb_i} $;
\item a transition function $\delta_{E_i}$ defined as $(s',q') \in \delta_{E_i}((s,q), h_{i}(s'))$ iff $s\widehat{\rightarrow_i} s'$ and $q' \in \delta_{i}(q, h_{i}(s'))$;
\item a set of infinitary accepting states $F_{E_i} = \widehat{S_i} \times F_i$;
\item a set of finitary accepting states $F^{fin}_{E_i} = \widehat{S_i} \times F^{fin}_i$.
\end{itemize}
\end{definition}

 \begin{figure*}\center
\includegraphics[scale=0.5]{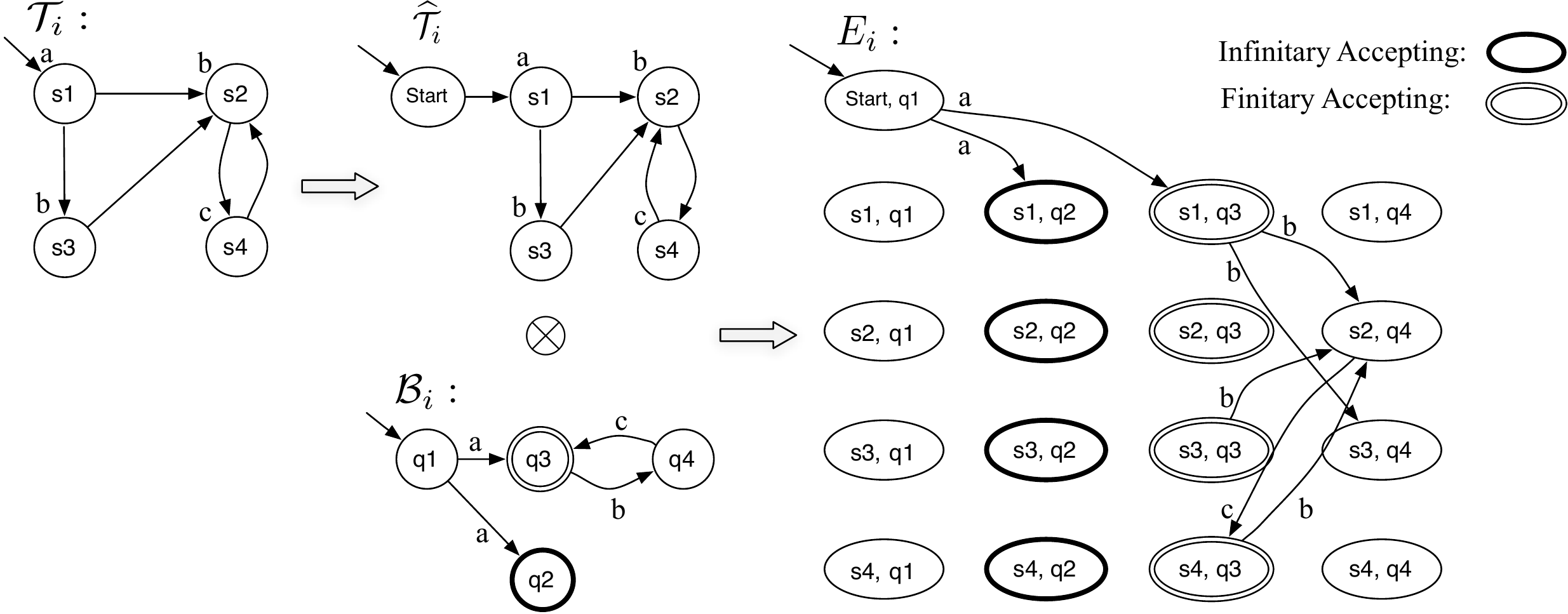}
\caption{An example of constructing $E_{i}$ given a \BA \ $\lb_{i}$ and a transition system $\lt_{i}$. As we can see in $E_i$, some states are unreachable from the initial state or cannot reach any accepting state. Such states can be removed to reduce the size of $E_{i}$.
}\label{fig:prod}
\end{figure*}

Informally, the \BA \ $\lb_i$ restricts the behavior of the transition system $\widehat{\lt_i}$ by permitting only certain acceptable transitions.
Note that we modify the traditional definition of product automata  \cite{baier2008principles} to accommodate the finitary accepting states.  
An example showing how to construct the product automaton given a transition system and a mixed \BA\  is illustrated in Fig. \ref{fig:prod}.
The following proposition shows that $\la(E_{i})$ is exactly the implementable local specification for agent $i$.

\begin{proposition}\label{prop:runofP}
Given any accepted word $w$ of $\lb_{i}$, there exist at least one trajectory of $\lt_{i}$ generating $w$ if and only if $w\in \la(E_{i})$.
\end{proposition}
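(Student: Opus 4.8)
The statement is a biconditional characterizing $\la(E_i)$ as the set of words of $\lb_i$ implementable by $\lt_i$. I would prove it by establishing a tight correspondence between runs of $E_i$ and pairs consisting of a run of $\lb_i$ together with a trajectory of $\widehat{\lt_i}$ (equivalently $\lt_i$, since the two generate the same language by the remark following~\eqref{eqn:hatT}). Since $E_i$ is a mixed \BA, I would handle the infinite-word case and the finite-word case separately, mirroring the structure of the proof of Prop.~\ref{prop:projection}.

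For the forward direction, suppose $w\in\la(E_i)$. Then there is an accepting run $(s(0),q(0))(s(1),q(1))\ldots$ of $E_i$ over $w$. Projecting onto the first component gives a sequence $s(0)s(1)\ldots$ in $\widehat{S_i}$; by the definition of $\delta_{E_i}$, consecutive states satisfy $s(k)\widehat{\rightarrow_i}s(k+1)$, and $s(0)=\widehat{s_{0_i}}=Start$, so after deleting the initial dummy state this is a trajectory of $\lt_i$. Moreover $w(k)=h_i(s(k+1))$ for each $k$, so this trajectory generates $w$. Projecting onto the second component and using $q'\in\delta_i(q,h_i(s'))$ gives a run of $\lb_i$ over $w$; the acceptance condition of $E_i$ ($F_{E_i}=\widehat{S_i}\times F_i$ for infinite words, $F^{fin}_{E_i}=\widehat{S_i}\times F^{fin}_i$ for finite words) translates exactly into the corresponding acceptance condition for that run on $\lb_i$, so $w\in\la(\lb_i)$ and the trajectory witnesses the claim.

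For the converse, suppose $w$ is accepted by $\lb_i$ — via an accepting run $q(0)q(1)\ldots$ — and some trajectory $s(1)s(2)\ldots$ of $\lt_i$ generates $w$, i.e. $w(k)=h_i(s(k+1))$. Prepend $s(0):=Start$ to get a trajectory of $\widehat{\lt_i}$, and form the interleaved sequence $(s(0),q(0))(s(1),q(1))\ldots$. I would check that this is a valid run of $E_i$ over $w$: the initial-state condition holds since $s(0)=\widehat{s_{0_i}}$ and $q(0)\in Q^{in}_i$; and each step is legal because $s(k)\widehat{\rightarrow_i}s(k+1)$ (by the trajectory, plus $Start\rightarrow_s s_{0_i}$ for the first step) and $q(k+1)\in\delta_i(q(k),w(k))=\delta_i(q(k),h_i(s(k+1)))$, which is precisely the clause defining $\delta_{E_i}$. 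Finally, since the $\lb_i$-component visits $F_i$ (resp. ends in $F^{fin}_i$) exactly as in the original accepting run, the interleaved run is accepting in $E_i$, so $w\in\la(E_i)$.

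The routine parts are the two index-bookkeeping inductions; the one subtlety worth spelling out is the off-by-one alignment introduced by the dummy $Start$ state and the Moore-style output map — the $k$-th input letter $w(k)$ equals $h_i$ of the state entered at step $k+1$, not the state at step $k$ — and checking that this is consistent with the definition of $\delta_{E_i}$, which reads the label $h_i(s')$ of the \emph{target} state. I expect this alignment, together with the need to treat finitary accepting states (which are terminal by assumption, so they can only appear at the end of a finite run), to be the only place where care is required; everything else is a direct unfolding of Def.~\ref{definition:PA}.
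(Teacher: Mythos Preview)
Your proposal is correct and follows essentially the same approach as the paper's proof: both argue by projecting an accepting run of $E_i$ onto its $\widehat{\lt_i}$-component to obtain a trajectory generating $w$, and conversely pair a generating trajectory of $\lt_i$ (prepended with $Start$) with an accepting run of $\lb_i$ to assemble an accepting run of $E_i$, treating the infinite and finite cases separately. Your explicit attention to the off-by-one alignment induced by the dummy $Start$ state and the target-labelled transition relation $\delta_{E_i}$ is exactly the bookkeeping the paper leaves implicit; the only superfluous step is re-deriving $w\in\la(\lb_i)$ in the forward direction, since that is already part of the hypothesis.
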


\begin{proof}
``$\Longleftarrow$'':
Given an infinite word $w \in  \mathcal{L} (E_i)$, there exists an infinite run $r_{E_i}=(Start, q_{i}(1))(s_{i}(1), q_{i}(2))\ldots$ of $E_i$ which generates $w$, where $s(1) = s_{0_{i}}$.
We define the projection of $r_{E_i}$ onto $\lt_i$ as $\gamma_{\lt_i}(r_{E_i}) =s_{i}(1)s_{i}(2)\ldots $.  By the definition of the product automaton, $\gamma_{\lt_i}(r_{E_i})$ is an infinite trajectory of $\lt_{i}$ generating $w$, which is a word of $\lb_{i}$.

Given a finite word $w^{fin} \in \la(E_{i})$ with length $k$, there exists a finite run in the form of $r_{E_i}=(Start, q_{i}(1))(s_{i}(1), q_{i}(2))\ldots(s_{i}(k), q_{i}(k+1))$ of $E_i$ which generates $w$.  The projection of $r_{E_i}$ on $\lt_{i}$ can be written as $ s_{i}(1)s_{i}(2)\ldots s_{i}(n)$.  By the definition of the product automaton, $ s_{i}(1)s_{i}(2)\ldots s_{i}(n)$ is a finite trajectory of $\lt_{i}$ generating the finite word $w^{fin}$, which means there exists a trajectory of $\lt_{i}$ generating $w^{fin} \in \la(\lb_{i})$.

``$\Longrightarrow$'':
Given an infinite word $w= w(1)w(2)\ldots$ accepted by $\lb_i$ and a trajectory $r_{\lt_i}=s_{i}(1)s_{i}(2)\ldots$ of $\lt_i$ satisfying $w$, then we have $s_{i}(j) \rightarrow_i s_{i}(j+1)$ and $w(j) = h_{i}({s_{i}(j)})$ for all $j \geq 1$. Given $w$, we can find an accepted run of $\lb_{i}$, denoted by $q_{i}(1)q_{i}(2)\ldots$, which generates $w$. According to (\ref{eqn:hatT}) and Def. \ref{definition:PA}, there must exist a run $r_{E_i} = (Start, q_{i}(1))(s_{i}(1), q_{i}(2))\ldots$, which is accepted by $E_i$ and generate word $w$. Hence we have $w \in  \mathcal{L} (E_i)$.

Similarly, given a finite word $w^{fin} \in \la(\lb_{i})$ with length $k$ and a trajectory $r_{\lt_i}= s_{i}(1)s_{i}(2)\ldots s_{i}(k)$ of $\lt_i$ generating $w^{fin}$, then we have $s_{i}(j) \rightarrow_i s_{i}(j+1)$ and $w(j) = h_{i}({s_{i}(j)})$ for all $1 \leq j \leq k$. Given $w^{fin}$, we can find an accepted run of $\lb_{i}$, denoted by $q_{i}(1)q_{i}(2)\ldots q_{i}(k+1)$, which generates $w$.
According to (\ref{eqn:hatT}) and Def. \ref{definition:PA}, there must exist a run $r_{E_i} = (Start, q(1))(s_{i}(1), q(2))\ldots(s_{i}(k), q(k+1))$, which is accepted by $E_i$ and generate word $w^{fin}$. Hence we have $w^{fin} \in  \mathcal{L} (E_i)$.
\end{proof}

\subsection{Implementable Global  Behaviors}\label{sec:sub:product}

To solve Prob. \ref{problem:main}, we need to select a word $w$ satisfying the (trace-closed) global specification and also guarantee that $w_{i} = w \upharpoonright_{\Sigma_{i}}$ is executable for all the agents $i\in I$.
Such a word can be obtained from the intersection of the global specification and the implementable global behaviors of the team, which can be modeled by the synchronous product of the implementable local \BAs\ $E_i$.

\begin{definition}[\cite {thiaTech}]\label{def:ProdBA}
The synchronous product  of $n$ mixed \BAs \  $E_i = (Q_{E_i},Q^{in}_{E_i},\Sigma_{E_{i}}, \delta_{E_i}, F_{E_i})$, denoted by $\parallel^{n}_{i=1} E_i$, is an automaton $\lp = (Q_{\lp},Q_{\lp}^{in},\Sigma_{\lp},\delta_{\lp})$, consisting of
\begin{itemize}
\item  a set of states $Q_{\lp}=Q_{E_1}\times \ldots\times Q_{E_n}$;
\item a set of initial states  $Q^{in}_{\lp}=  Q^{in}_{E_1}\times \ldots\times Q^{in}_{E_n}$;
\item a set of inputs $\Sigma_{\lp}= \cup^{n}_{i=1} \Sigma_{E_{i}}$;
\item a transition function $\delta_{\lp}: Q_{\lp} \times \Sigma_{\lp} \rightarrow 2^{Q_{\lp}}$ defined as
$q' \in \delta_{\lp}(q, \sigma)$ such that if $i \in I_\sigma$, $q'[i] \in \delta_{i}(q[i] , \sigma)$, otherwise $q[i] = q'[i]$, where $I_\sigma = \{i \in \{1,\ldots,n\} \mid \sigma \in \Sigma_i \}$ and $q[i]$ denotes the $i$th component of $q$.
\end{itemize}
\end{definition}

The synchronous product composes $n$ components, each of which represents the implementable local specification $E_{i}$ for agent $i$. The synchronous product captures the synchronization among the agents as well as their parallel executions. Informally,
a word $w$ is accepted by $\lp$ if and only if  for each $i \in I$, $w \!\upharpoonright_{\Sigma_{i}}$ is accepted by the corresponding component $E_{i}$. 
A method to find an accepted word of $\lp$ is given in \cite{thiaTech}. 
The next proposition shows that $\la(\lp)$ captures all possible global words that can be implemented by the team.
\begin{proposition}[\cite {thiaTech}]\label{prop:sync}
The language of $\lp$, where $\lp = \parallel_{i\in I} E_i$, is equal to the product of the languages of $E_{i}$ (\ie $\parallel_{i\in I} \la(E_i)$).
\end{proposition}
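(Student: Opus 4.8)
The plan is to prove both inclusions $\la(\lp) \subseteq \parallel_{i\in I}\la(E_i)$ and $\parallel_{i\in I}\la(E_i)\subseteq \la(\lp)$ by tracking, for an arbitrary word $w$ over $\Sigma_{\lp}=\cup_{i\in I}\Sigma_{E_i}$, the relationship between runs of $\lp$ on $w$ and runs of the components $E_i$ on the projections $w\!\upharpoonright_{\Sigma_i}$. The key structural observation, which I would isolate first as a lemma, is a \emph{projection-of-runs} correspondence: given any run $r_{\lp}=q(0)q(1)\ldots$ of $\lp$ on $w$, the sequence of $i$th components $q(0)[i]\,q(1)[i]\,\ldots$, after deleting the stuttering steps (those positions $k$ where $w(k)\notin\Sigma_i$, for which Def.~\ref{def:ProdBA} forces $q(k+1)[i]=q(k)[i]$), is exactly a run of $E_i$ on $w\!\upharpoonright_{\Sigma_i}$; and conversely, given runs $r_i$ of each $E_i$ on $w\!\upharpoonright_{\Sigma_i}$, one can interleave them according to the positions of $w$ to build a run of $\lp$ on $w$. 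This is immediate from the transition rule: a $\lp$-transition on $\sigma$ advances component $i$ by $\delta_i$ exactly when $i\in I_\sigma$ (i.e. $\sigma\in\Sigma_i$), and freezes it otherwise, so projecting onto coordinate $i$ and erasing frozen steps recovers precisely the letters of $\Sigma_i$ in order, namely $w\!\upharpoonright_{\Sigma_i}$.

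With that correspondence in hand, the two inclusions follow by reading off the acceptance conditions. For $w\in\la(\lp)$: there is an accepting run $r_{\lp}$; its $i$th projection is a run of $E_i$ on $w\!\upharpoonright_{\Sigma_i}$, and I must check that this projected run is \emph{accepting} in the mixed-\BA\ sense. Here I would split into cases on whether $w\!\upharpoonright_{\Sigma_i}$ is finite or infinite. If $w\!\upharpoonright_{\Sigma_i}$ is infinite, then since the $i$th component makes infinitely many non-stuttering moves and the overall run hits $F_{E_1}\times\cdots\times F_{E_n}$-type accepting configurations infinitely often, the projection visits $F_{E_i}$ infinitely often — this needs the acceptance condition for $\lp$, which Def.~\ref{def:ProdBA} as written omits, so I would need to invoke the intended definition (from \cite{thiaTech}) under which $\lp$ accepts by a generalized/mixed condition requiring each component's accepting set to be met. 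If $w\!\upharpoonright_{\Sigma_i}$ is finite, the $i$th component eventually freezes in a terminal finitary accepting state $q\in F^{fin}_{E_i}$, and since these are terminal, the projected finite run ends in $F^{fin}_{E_i}$. Either way $w\!\upharpoonright_{\Sigma_i}\in\la(E_i)$ for all $i$, so $w\in\parallel_{i\in I}\la(E_i)$ by Def.~\ref{definition:prod_lang}. Conversely, given $w$ with $w\!\upharpoonright_{\Sigma_i}\in\la(E_i)$ for every $i$, pick an accepting run $r_i$ of each $E_i$, interleave them into a run $r_{\lp}$ of $\lp$ on $w$ via the converse direction of the lemma, and verify that $r_{\lp}$ meets $\lp$'s acceptance condition component-by-component — infinitary components are hit infinitely often because their driving projections are, and finitary components settle into terminal accepting states; so $w\in\la(\lp)$.

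The main obstacle is purely bookkeeping around the \emph{mixed} acceptance: one must be careful that a global infinite word $w$ may project to a finite word for some agents and an infinite word for others, so $\lp$'s acceptance condition has to simultaneously impose an "infinitely often" requirement on the infinitary-accepting components and an "eventually terminates in a finitary accepting state" requirement on the others, with the two kinds of components possibly coexisting in a single accepting run. Making the stutter-erasure map well-defined and bijective on runs (so that the finite/infinite dichotomy on $w\!\upharpoonright_{\Sigma_i}$ transfers cleanly to the component runs, and terminality of finitary states is not violated) is where the argument needs care; the transition-rule manipulations themselves are routine. Since the proposition is attributed to \cite{thiaTech}, I would ultimately present this as a sketch: state the projection-of-runs lemma, then deduce both inclusions from it, and refer to \cite{thiaTech} for the full treatment of the mixed acceptance condition in the synchronous product.
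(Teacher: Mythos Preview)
The paper does not actually supply a proof of this proposition: it is stated with attribution to \cite{thiaTech} and no argument follows. So there is nothing in the paper to compare your proposal against beyond the citation itself.

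Your proposed argument is the standard and correct one for this kind of result: establish a bijective correspondence between runs of the synchronous product on $w$ and tuples of component runs on the projections $w\!\upharpoonright_{\Sigma_i}$, using the fact that the product transition rule advances component $i$ exactly on letters in $\Sigma_i$ and freezes it otherwise; then transfer acceptance in both directions. You also correctly flag the genuine gap in the paper's Def.~\ref{def:ProdBA}, which lists the state set, initial states, alphabet, and transition function of $\lp$ but omits any accepting condition; without importing the intended mixed generalized-B\"uchi condition from \cite{thiaTech}, the statement is not even well-posed. Your handling of the finite-versus-infinite projection split and the reliance on terminality of finitary accepting states is exactly where the care is needed, and your plan to present this as a sketch deferring full details to \cite{thiaTech} matches what the paper itself effectively does.
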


Finally, we can produce the solution to Prob. \ref{problem:main} by selecting an arbitrary word $w$ from $\la(\lp) \cap \la(\lb_{\phi})$, obtaining the local word $w_{i} = w\upharpoonright_{\Sigma_i}$ and generating the corresponding cc-strategy $r^{c}_{i}$ for each agent.
To find $w\in \la(\lp) \cap \la(\lb_{\phi})$, we can construct an automaton to accept $\la(\lp) \cap \la(\lb_{\phi})$ because of the following proposition:
\begin{proposition}[\cite {thiaTech}]
Let $\lp^1$ and $\lp^2$ be two synchronous products of mixed \BAs. Then a synchronous product $\lp^3$ can be effectively constructed such that $\la(\lp^3) = \la(\lp^1) \cap \la(\lp^2)$.
\end{proposition}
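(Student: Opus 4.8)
The plan is to reduce the statement to Proposition~\ref{prop:sync}, which already establishes that the language of a synchronous product equals the product of the languages of its components. Write $\lp^1 = \parallel_{i\in I^1} E^1_i$ and $\lp^2 = \parallel_{j\in I^2} E^2_j$, where each $E^1_i$ is a mixed \BA\ over a subalphabet $\Sigma^1_i$ and each $E^2_j$ over $\Sigma^2_j$; we assume, as holds in our application (where one factor is $\lp$ and the other is $\lb_\phi$ regarded as a one-component product), that both are synchronous products over the same global alphabet $\Sigma = \cup_{i\in I^1}\Sigma^1_i = \cup_{j\in I^2}\Sigma^2_j$.

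The construction of $\lp^3$ is then simply a ``flattening.'' Take the disjoint union $I^3 := I^1 \sqcup I^2$ of the index sets and let $\lp^3 := \parallel_{k\in I^3} E_k$ be the synchronous product of Def.~\ref{def:ProdBA} of \emph{all} the component automata of $\lp^1$ and $\lp^2$ together, each kept over its own subalphabet. Since every $E_k$ is a mixed \BA, $\lp^3$ is again a synchronous product of mixed \BAs, and the construction is effective: Def.~\ref{def:ProdBA} gives an explicit single product of finitely many finite-state automata, and repeated or overlapping subalphabets among the $\Sigma^1_i$ and $\Sigma^2_j$ are harmless, since Def.~\ref{def:distr} and Def.~\ref{def:ProdBA} only require a collection indexed by $I^3$ whose union is $\Sigma$.

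It then remains to verify $\la(\lp^3) = \la(\lp^1)\cap\la(\lp^2)$. By Proposition~\ref{prop:sync} applied to $\lp^3$ together with the definition of the product of languages (Def.~\ref{definition:prod_lang}), $w\in\la(\lp^3)$ iff $w\in\Sigma^{\infty}$, $w\!\upharpoonright_{\Sigma^1_i}\in\la(E^1_i)$ for all $i\in I^1$, and $w\!\upharpoonright_{\Sigma^2_j}\in\la(E^2_j)$ for all $j\in I^2$. Applying Proposition~\ref{prop:sync} to $\lp^1$ and to $\lp^2$ separately, the first group of conditions is equivalent to $w\in\la(\lp^1)$ and the second to $w\in\la(\lp^2)$; hence $w\in\la(\lp^3)$ iff $w\in\la(\lp^1)\cap\la(\lp^2)$, which is what we want.

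The only real subtlety, and the step I would scrutinize most, is the implicit claim that the synchronous product is associative, so that a ``product of products'' collapses to one flat product over the combined index set with the original $E_k$ as components. This follows from the coordinatewise characterization of acceptance in Def.~\ref{def:ProdBA}: a run of $\lp^3$ is precisely a tuple of component runs synchronized on shared symbols, which is the same data as a pair of runs of $\lp^1$ and $\lp^2$ on the common word; and for mixed \BAs\ one must additionally check that this bijection of runs respects both acceptance conditions (finite runs ending in finitary accepting states and infinite runs visiting infinitary accepting states infinitely often), but this is inherited verbatim from the argument already used in the proof of Proposition~\ref{prop:sync}.
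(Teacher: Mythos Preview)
Your argument is correct: flattening the two products into a single synchronous product over the disjoint union of index sets and then invoking Proposition~\ref{prop:sync} (together with Def.~\ref{definition:prod_lang}) gives exactly $\la(\lp^3)=\la(\lp^1)\cap\la(\lp^2)$, under the stated assumption that both products share the same global alphabet $\Sigma$. The associativity concern you flag is handled precisely by the componentwise characterization of acceptance that Proposition~\ref{prop:sync} encodes.

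There is, however, no proof in the paper to compare against. The proposition is imported verbatim from the cited reference~\cite{thiaTech} and is stated without proof; the paper treats it as a black box used in Alg.~\ref{alg:overall}. So your write-up is not a reconstruction of the paper's argument but rather a short derivation of the cited result from another result cited from the same source (Proposition~\ref{prop:sync}). That is a perfectly reasonable thing to do, and it has the virtue of making clear that the intersection proposition is essentially a corollary of Proposition~\ref{prop:sync} once one notices that the synchronous product is associative. One caveat worth making explicit: Def.~\ref{def:ProdBA} in the paper specifies the state space and transitions of the synchronous product but defers the acceptance condition to~\cite{thiaTech}, so your last paragraph's appeal to ``the coordinatewise characterization of acceptance in Def.~\ref{def:ProdBA}'' is really an appeal to Proposition~\ref{prop:sync} again rather than to anything stated in Def.~\ref{def:ProdBA} itself.
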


Specifically, we treat $\lb_{\phi}$ as a synchronous product with one component that includes only infinitary accepting states. The overall approach is summarized in Alg. \ref{alg:overall}.
The following theorem shows that the output of Alg. \ref{alg:overall} is indeed the solution to  Prob. \ref{problem:main}.

\begin{algorithm}
\caption{: Synthesis of a set of cc-strategies for a team of agents from a global specification}
\label{alg:overall}
\begin{algorithmic}[1]
\INPUT{A LTL formula $\phi$ over $\Sigma$, a distribution $\{\Sigma_i\subseteq\Sigma,i\in I\}$, and a set of transition systems $\{\lt_i, i\in I\}$}
\OUTPUT{A set of cc-strategies $\{r^c_{i}, i\in I\}$}
\STATE Convert $\phi$ to a \BA\ $\lb_{\phi}$ using LTL2BA \cite{gastin2001fast}

\STATE Run Alg. \ref{alg:trace}
\IF{$\la (\lb_{\phi})$ is not trace-closed}
\RETURN  solution not found
\ELSE
\STATE Construct $\lb_i$ using Alg. \ref{alg:projection} for each $i\in I$
\STATE Construct $E_i = \lt_{i} \otimes \lb_{i}$ (Def. \ref{definition:PA}) for each $i\in I$
\STATE Construct $\lp$ (Def. \ref{def:ProdBA}) and then construct a synchronous product accepting $\la(\lp) \cap \la(\lb_{\phi})$
\IF{$\la(\lp) \cap \la(\lb_{\phi}) = \emptyset$}
\RETURN  solution not found
\ELSE
\STATE Obtain $w \in\la(\lp) \cap \la(\lb_{\phi})$
 \STATE Obtain a set of local words $\{w_i = w\upharpoonright_{\Sigma_i}, i\in I\}$
\STATE Construct a set of automata $\{\lb^c_i, i\in I\}$, each of which accepts only the word $w_i$.
\STATE Construct $E^{c}_i = \lb^c_i \otimes \lt_i$ (Def. \ref{definition:PA}) for all $i\in I$
\STATE Find an accepted run $r_i$ of $E^{c}_i$ and project $r_i$ on $\lt_{i}$ to obtain $r^{c}_{i}$ for all $i\in I$.
\RETURN$\{r^c_i, i\in I\}$
\ENDIF
\ENDIF
\end{algorithmic}
\end{algorithm}

\begin{theorem}
If $\la(\lb_{\phi})$ is trace-closed, the set of cc-strategies $\{r^c_i, \in I\}$ obtained by Alg. \ref{alg:overall} satisfies $\parallel_{i\in I} \{w_{i}\} \neq \emptyset$ and $\parallel_{i\in I} \{w_{i}\} \subseteq \la(\lb_{\phi})$,  where $w_{i}$ is the corresponding word of $\lt_i$  generated by $r^c_i$.
 \end{theorem}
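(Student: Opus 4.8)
The plan is to trace the word $w$ selected in Alg. \ref{alg:overall} through the chain of constructions and invoke the propositions already established. Throughout, assume Alg. \ref{alg:overall} does return a set $\{r^c_i,i\in I\}$, i.e.\ that $\la(\lp)\cap\la(\lb_\phi)\neq\emptyset$; otherwise there is nothing to prove. Fix the word $w\in\la(\lp)\cap\la(\lb_\phi)$ chosen by the algorithm and put $w_i:=w\!\upharpoonright_{\Sigma_i}$.

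First I would show that each $w_i$ is exactly the word of $\lt_i$ generated by the returned $r^c_i$. Since $w\in\la(\lp)$ and $\lp=\parallel_{i\in I}E_i$, Prop. \ref{prop:sync} gives $w\in\parallel_{i\in I}\la(E_i)$, hence $w_i=w\!\upharpoonright_{\Sigma_i}\in\la(E_i)$ for every $i\in I$. Because $\la(E_i)\subseteq\la(\lb_i)$ and, by Prop. \ref{prop:projection}, $\la(\lb_i)=\la(\lb_\phi)\!\upharpoonright_{\Sigma_i}$, the string $w_i$ is an accepted word of $\lb_i$ which, by Prop. \ref{prop:runofP}, is generated by at least one trajectory of $\lt_i$. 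The automaton $\lb^c_i$ built by the algorithm accepts precisely $\{w_i\}$, so $\la(E^c_i)=\la(\lb^c_i\otimes\lt_i)$ is, by the same reasoning as in Prop. \ref{prop:runofP}, nonempty and equals $\{w_i\}$; hence any accepted run $r_i$ of $E^c_i$ projects onto $\lt_i$ to a trajectory $r^c_i$ of $\lt_i$ whose generated word is $w_i$. This argument must be split into the finite and the infinite case exactly as in the proof of Prop. \ref{prop:runofP}, since the projection $w\!\upharpoonright_{\Sigma_i}$ of the infinite word $w$ may be a finite word; this is precisely why $\lb_i$ and $E_i$ are mixed \BAs. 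One also notes that using $\lt_i$ rather than $\widehat{\lt_i}$ at this stage is harmless, since the two generate the same language and the $Start$ state only carries the dummy observation.

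Next I would establish the two claimed properties of $\parallel_{i\in I}\{w_i\}$. For nonemptiness, $w$ itself satisfies $w\!\upharpoonright_{\Sigma_i}=w_i$ for all $i\in I$, so by Def. \ref{definition:prod_lang} we have $w\in\parallel_{i\in I}\{w_i\}$, whence $\parallel_{i\in I}\{w_i\}\neq\emptyset$. For the inclusion, Prop. \ref{prop:trace-word->product} gives $\parallel_{i\in I}\{w_i\}=\parallel_{i\in I}\{w\!\upharpoonright_{\Sigma_i}\}=[w]$; since $w\in\la(\lb_\phi)$ and $\la(\lb_\phi)$ is trace-closed, Def. \ref{definition:trace-equivalence} (equivalently, Prop. \ref{prop:trace->solution} with $L=\la(\lb_\phi)$) yields $[w]\subseteq\la(\lb_\phi)$, so $\parallel_{i\in I}\{w_i\}\subseteq\la(\lb_\phi)$. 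Together with the identification of $w_i$ as the word generated by $r^c_i$ from the previous paragraph, this is exactly the assertion of the theorem.

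I expect the only real obstacle to be the bookkeeping of the first step: matching the informally stated final steps of Alg. \ref{alg:overall} (select $w$; project to $w_i$; build $\lb^c_i$ accepting $\{w_i\}$; build $E^c_i=\lb^c_i\otimes\lt_i$; extract and project a run) to the precise statement of Prop. \ref{prop:runofP}, and in particular making sure that (a) the finite-word branch is handled, because a local word $w_i$ can be finite even though $w$ and $\phi$ only involve infinite words, and (b) the extracted run of $E^c_i$ genuinely exists and projects to a bona fide trajectory of $\lt_i$ generating exactly $w_i$. Everything else reduces to direct substitution into Propositions \ref{prop:trace-word->product}, \ref{prop:trace->solution}, \ref{prop:projection}, \ref{prop:runofP}, and \ref{prop:sync}.
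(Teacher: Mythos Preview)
Your proposal is correct and follows essentially the same approach as the paper's proof: use $w\in\la(\lb_\phi)$ together with trace-closedness (Prop.~\ref{prop:trace->solution}) for the inclusion, $w\in\parallel_{i\in I}\{w_i\}$ for nonemptiness, and $w\in\la(\lp)=\parallel_{i\in I}\la(E_i)$ plus Prop.~\ref{prop:runofP} to ensure each $w_i$ is generated by a trajectory of $\lt_i$. Your version is in fact more careful than the paper's, which stops at ``there exists a trajectory $r^c_i$ generating $w_i$'' without explicitly tracing through the $\lb^c_i$/$E^c_i$ steps or the finite--infinite case split you rightly flag.
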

 \begin{proof} Since $w \in \la(\lb_{\phi})$ and $\la(\lb_{\phi})$ is trace-closed, according to Prop: \ref{prop:trace->solution}, we have $\parallel_{i\in I} \{w_{i}\} \subseteq \la(\lb_{\phi})$.  Since $w \in \parallel_{i\in I} \{w_{i}\}$, we have $ \parallel_{i\in I} \{w_{i}\} \neq \emptyset$.
Since $w \in \la(\lp)$ and $ \la(\lp) = \parallel_{i\in I} \la(E_i)$, we have $w\in \parallel_{i\in I} \la(E_i)$. According to Def. \ref{definition:prod_lang}, $w_i \in \la(E_i)$. According to Prop. \ref{prop:runofP}, there exists a trajectory of $\lt_i$ $r^c_i$ generating $w_{i}$, for all $i\in I$.
\end{proof}

\begin{remark}[Completeness]
In the case that $\mathcal L(\lb_{\phi})$ is trace-closed, our approach is complete in the sense that we find a solution to Prob. \ref{problem:main} if one exists.  This follows directly from Prop. \ref{prop:runofP} and the fact that $\la(\lp) = \parallel_{i\in I} \la(E_i)$.  If $\la(\lb_{\phi})$ is not trace-closed,  a complete solution to Prob. \ref{problem:main} requires one to find a non-empty trace-closed subset of $\la(\lb_{\phi})$ if one exists. This problem is not considered in this paper. Therefore, our overall approach to Prob. \ref{problem:main} is not complete.
\end{remark}

\begin{remark}[Complexity]
From a computational complexity point of view, the bottlenecks of the presented approach are the computations relating to $\lp$, because $|Q_{\lp}|$ is bounded above by $\prod_{i\in I}|Q_{E_i}|$ and the upper bound of $|Q_{E_i}|$ is $O(|Q| \cdot |S_{i}|)$.
For most robotic applications, the size of the task specification (\ie $|Q|$) is usually much smaller comparing to the size of the agent model (\ie $|S_{i}|$). Therefore, if we can shrink the size of $Q_{E_i}$ by removing the information about the agent model from $E_{i}$, we can reduce the complexity significantly.
Such reduction can be achieved by using LTL without the next operator and taking a stutter closure of $E_{i}$. This will be addressed in our future work.
\end{remark}

\section{Automatic Deployment in RULE}
\label{sec:casestudy}

In this section, we show how our method can be used to deploy a team of Khepera III car-like robots in our Robotic Urban-Like Environment (Fig. \ref{fig:platform}). The platform consists of a collection of roads, intersections, and parking lots. Each intersection has traffic lights. The city is easily reconfigurable by re-taping the platform. All the cars can communicate through Wi-Fi with a desktop computer, which is used as an interface to the user ({\it i.e.,} to enter the global specification) and to perform all the computation necessary to generate the individual cc-strategies.
Once computed, these are sent to the cars,
which execute the task autonomously by interacting with the environment and by communicating with each other, if necessary. We assume that inter-robot communication is always possible.

 We model the motion of each robot in the platform using a transition system, as shown in Fig. \ref{RULE_TS}.
 The set of states $S_{i}$ is the set of labels assigned to roads, intersections and parking lots (see Fig. \ref{fig:platform}) and the relation $\rightarrow_{i}$ shows how these are connected.
 We distinguish one bound of a road from the other since the parking lots can only be located on one side of each road. For example, we use $R_{1r}$ and $R_{1l}$ to denote the two bounds  of road $R_{1}$. Each state of $\lt_i$ is associated with a set of motion primitives. For example, at region $R_{1r}$, which corresponds to the access point for parking lot $P_1$
(see Fig. \ref{RULE_TS}), the robot can choose between two motion primitives: \texttt{follow\_road} and {\tt park}, which allow the robot to stay on the road or turn right into $P_{1}$.
If the robot follows the road, it reaches the vertex $I_{2}$, where three motion primitives are available:   {\tt U\_turn}, {\tt turn\_right\_int}, and {\tt go\_straight\_int}, which allow the robot to make a U-turn, turn right or go straight through the intersection.  It can be seen that, by selecting a motion primitive available at a region, the robot can correctly
execute a trajectory of $\lt_i$, given that it is initialized at a vertex of $\lt_{i}$.  The choice of a motion primitive uniquely determines the next vertex. In other words, a set of cc-strategies defined in Sec. \ref{sec:prob_form} and obtained as described in Sec. \ref{sec:synthesis} can be immediately implemented by the team.

\begin{figure}
\center
   \begin{tabular}{cc}
         \includegraphics[scale = 0.8]{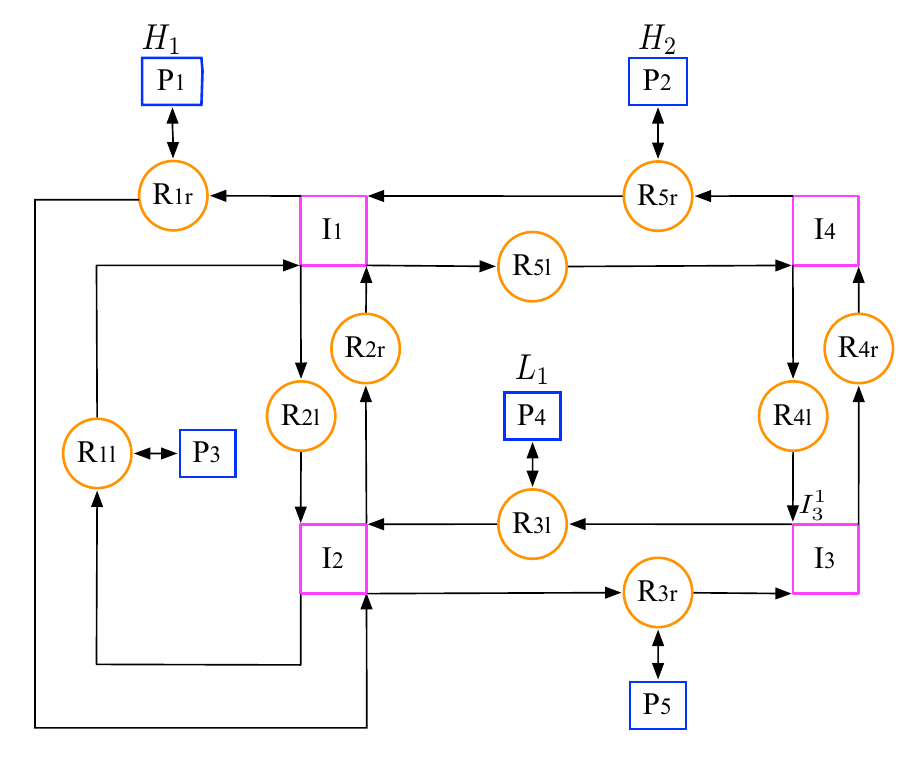}
   \end{tabular}
\caption{Transition system $\lt_1$ for robot $1$. The states represents the vertices in the environmental graph (Fig. \ref{fig:platform}), $s_{0_{1}}$ shows that robot $1$ starts at $R_{1r}$; $\rightarrow_{1}$ captures the connectivity between the vertices;  $h_{1}$ captures the locations of  the unsafe regions and the requests. The dummy request $\varpi_1$ is assigned to all the vertices that have no property and is omitted in this figure.
} \label{RULE_TS}
\end{figure}

Assume that service requests, denoted by $H_1, H_2, L_1, L_2$ and $L_3$, occur at parking lots $P_1, P_2, P_4, P_5$ and $P_3$, respectively.  ``H" stands for ``heavy'' requests requiring the efforts of multiple cars while ``L"  represents  ``light'' requests that only need one car to service. Specifically,
$H_1$ is shared by all three cars and $H_2$ is shared between car $1$ and $2$.
As we can see in Fig. \ref{fig:platform}, the number of parking spaces of a parking lot equals the number of cars needed to service the request that occurs at this parking lot.  For example, $P_1$ where $H_1$ occurs has three parking spaces.
Besides the set of requests, we also consider some regions to be unsafe. In this example, we assume that intersection $I_3$ is unsafe for all robots before request $H_1$ is serviced.  We use the output map $h_{i}$ of $\lt_{i}$ (see Fig. \ref{RULE_TS}) to capture the locations of requests and unsafe regions. A ``dummy request'' $\varpi_{i}$ is assigned to all the other regions. We use a special semantics for $\varpi_{i}$: a robot does not service any request when visiting a region where $\varpi_{i}$ occurs.

 We model the capabilities of the cars to service requests while considering unsafe regions as a distribution: $\Sigma_1 = \{H_1, H_2, L_1, I_3^1, \varpi_1\}, \Sigma_2 = \{H_1, H_2, L_2, I_3^2, \varpi_2\}$ and  $\Sigma_3 = \{H_1,  L_3, I_3^3, \varpi_3\}$. Note that we treat the unsafe region $I_{3}$ as an independent property assigned to each car since it does not require the cooperation of the cars.
We aim to find a satisfying set of individual cc-strategies for each robot to satisfy the global specification $\phi$, which is the conjunction of the following LTL formulas over the set of properties $\Sigma = \{H_1, H_2, L_1, L_2, L_3, I^1_3, I^2_3, I^3_3, \varpi_1, \varpi_2, \varpi_3\}$:
\begin{enumerate}
\item Request $H_2$ is serviced infinitely often.
$$
\gl \ev H_2
$$
\item First service request $H_1$, then service request $L_1$ and $L_2$ regardless of the order or request $L_3$.
$$
\ev (H_1 \andltl (\ev(L_1 \andltl L_2) \orltl \ev L_3))
$$
\item Do not visit intersection $I_3$ until $H_1$ is serviced.
$$
\notltl(I^1_3 \orltl I^2_3 \orltl I^3_3)\  \un \ H_1
$$
\end{enumerate}

By applying Alg. \ref{alg:overall}, we first learn that the language satisfying ${\phi}$ is trace-closed. Then, we obtain the implementable automaton $\parallel_{i\in I}E_i$ as described in Sec. \ref{sec:sub:imple} and \ref{sec:sub:product}.  Finally, we choose a word $w \in \mathcal L (\lb_{\phi}) \cap  \mathcal L(\parallel_{i\in I} E_i)$ and project $w$ on the local alphabets $\Sigma_{i}$, $i\in \{1,2,3\}$ to obtain the local words, which lead to the following cc-strategies:

\begin{small}
$$
\begin{array}{l c l}
r^c_1 & = & R_{1r}I_2R_{2r}I_1R_{1r}P_1R_{1r}I_2R_{3r}I_3R_{4r}I_4R_{5r}P_2P_2\ldots,\\
r^c_2 & = & R_{5r}I_4R_{1r}P_1R_{1r}I_2R_{2r}I_1R_{5l}I_4R_{5r}P_2P_2\ldots,\\
r^c_3 & = & R_{2r}I_1R_{1r}P_1R_{1r}I_2R_{1l}P_3.
\end{array}
$$
\end{small}The language satisfying the global specification $\phi$ includes only infinite words. Hence, both cars $1$ and $2$ have infinite cc-strategies, since $H_{2}$ needs to be serviced infinitely many times. Note that car $3$ has a finite cc-strategy. The synchronization is only triggered when the cars are about to service shared requests, \ie when at $P_{1}$ and $P_{2}$. Besides these synchronization  moments, the cars follow their cc-strategies and execute their individual tasks in parallel, which speed up the process of accomplishing the global task. Snapshots from a movie of the actual deployment are shown in Fig. \ref{fig:snapshots}.
The movie of the deployment in the RULE platform is available at \url{http://hyness.bu.edu/CDC2011}.

\begin{figure}
\begin{tabular}{c@{\ }c@{\ }c}
\includegraphics[scale=0.225]{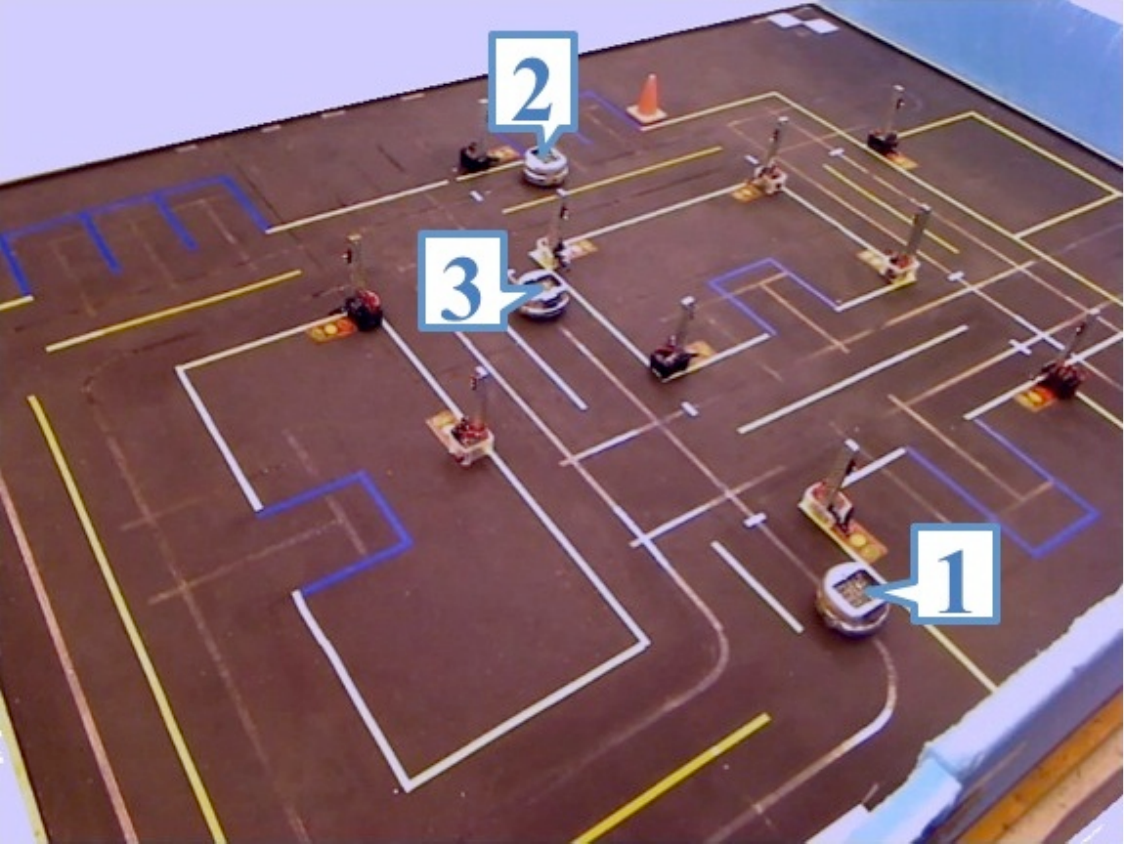} & 
\includegraphics[scale=0.17]{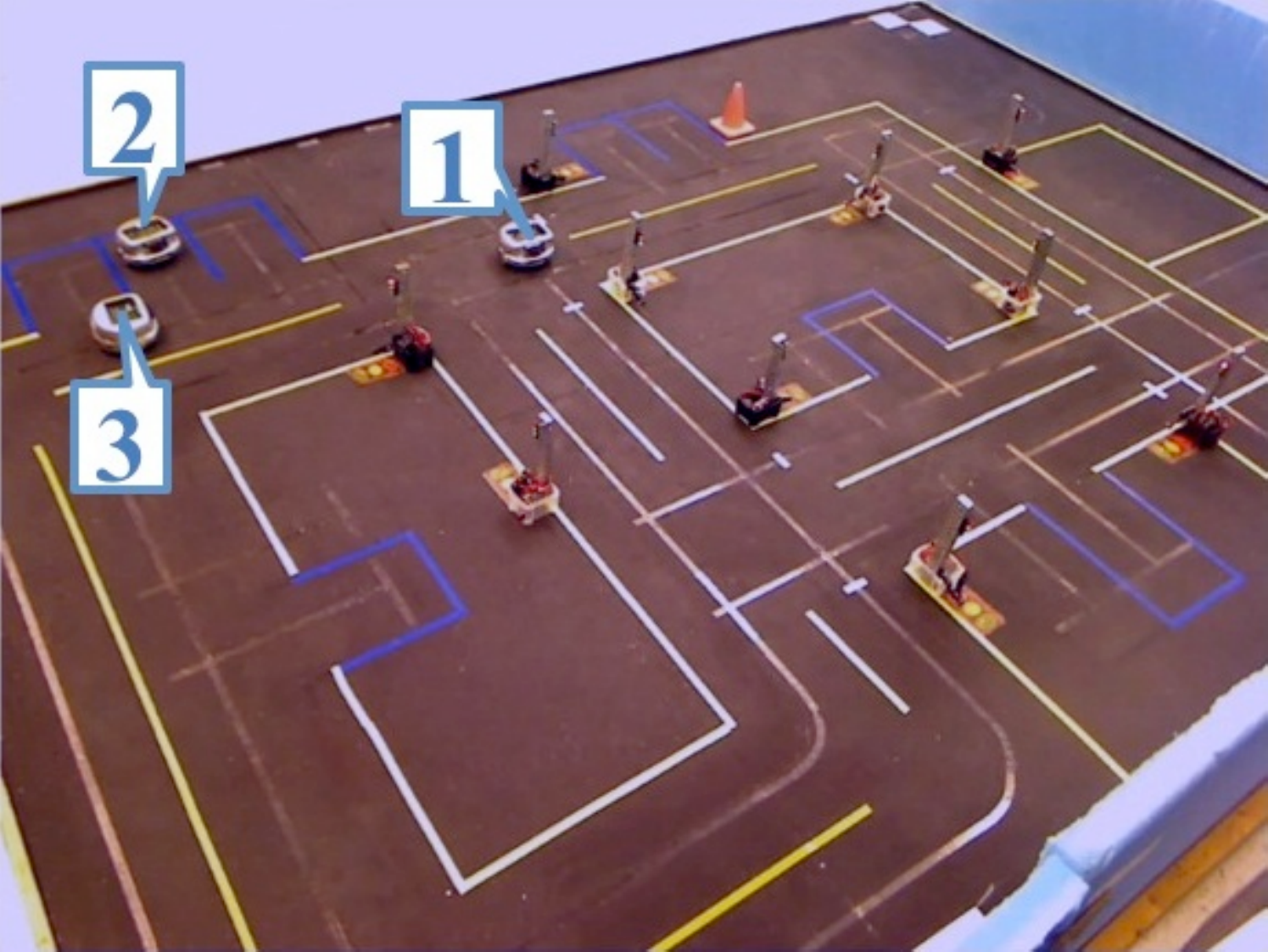} &  
\includegraphics[scale=0.17]{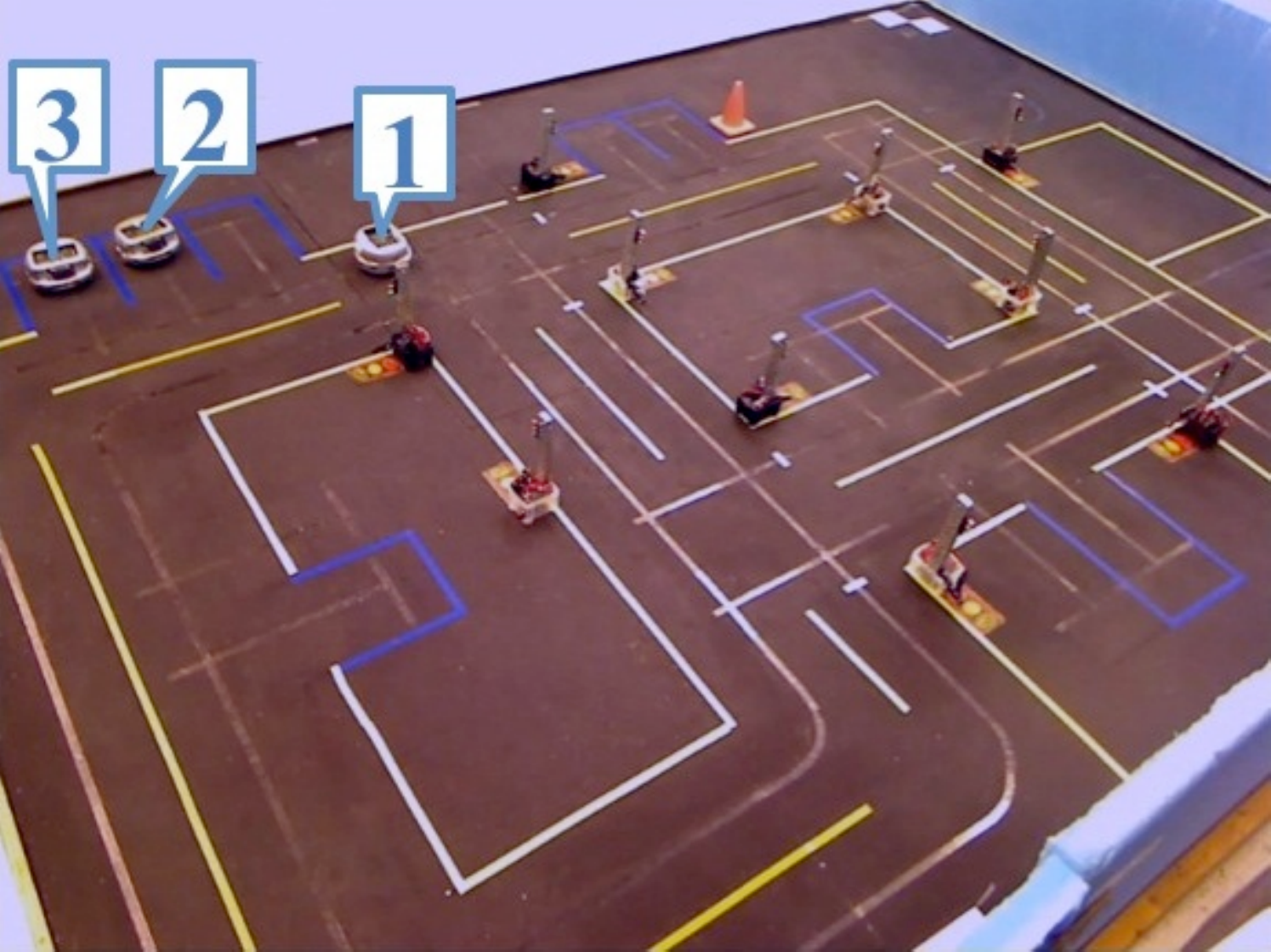}\\
(1) & (2) & (3) \\
\includegraphics[scale=0.17]{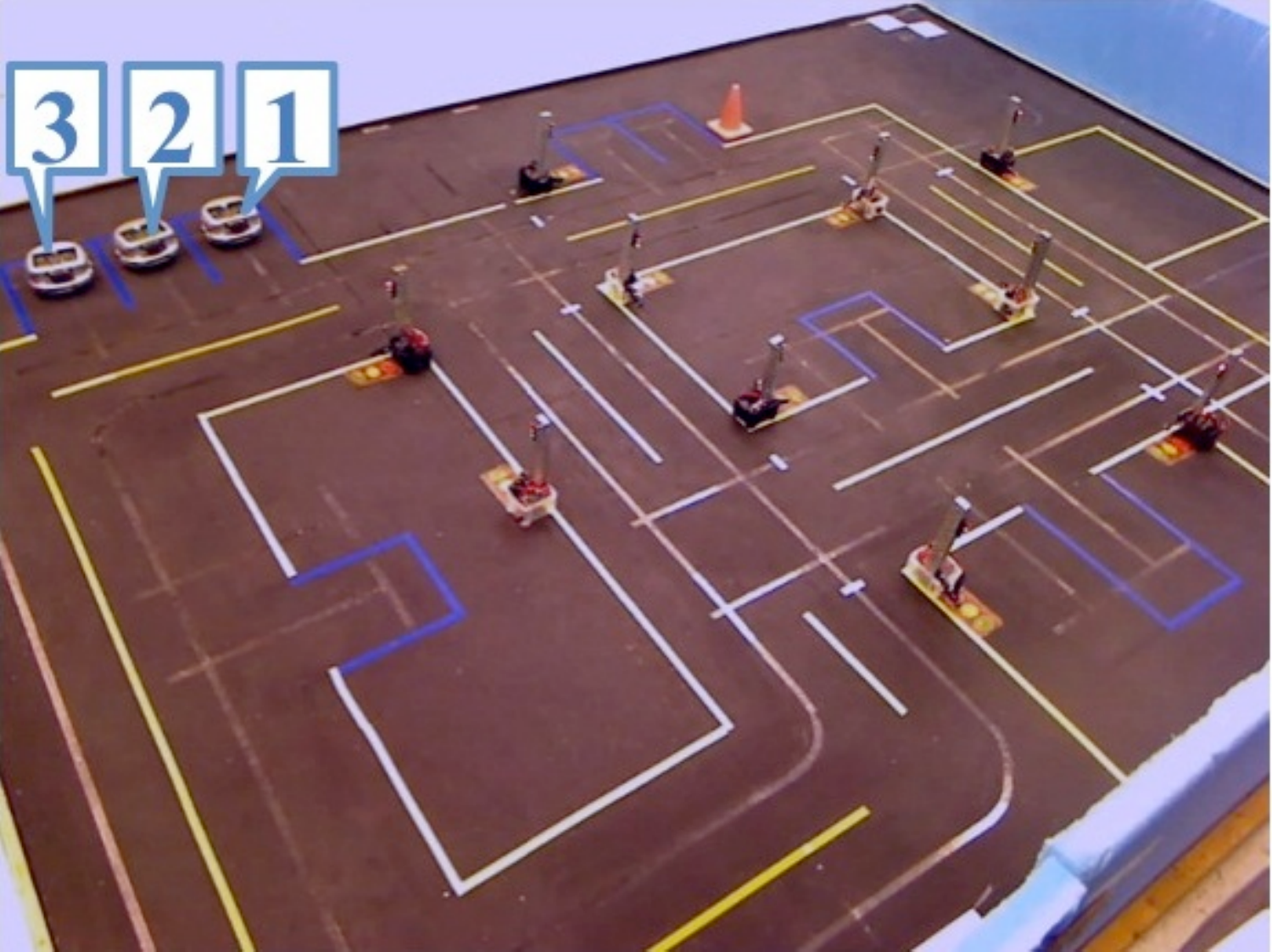}  & 
\includegraphics[scale=0.17]{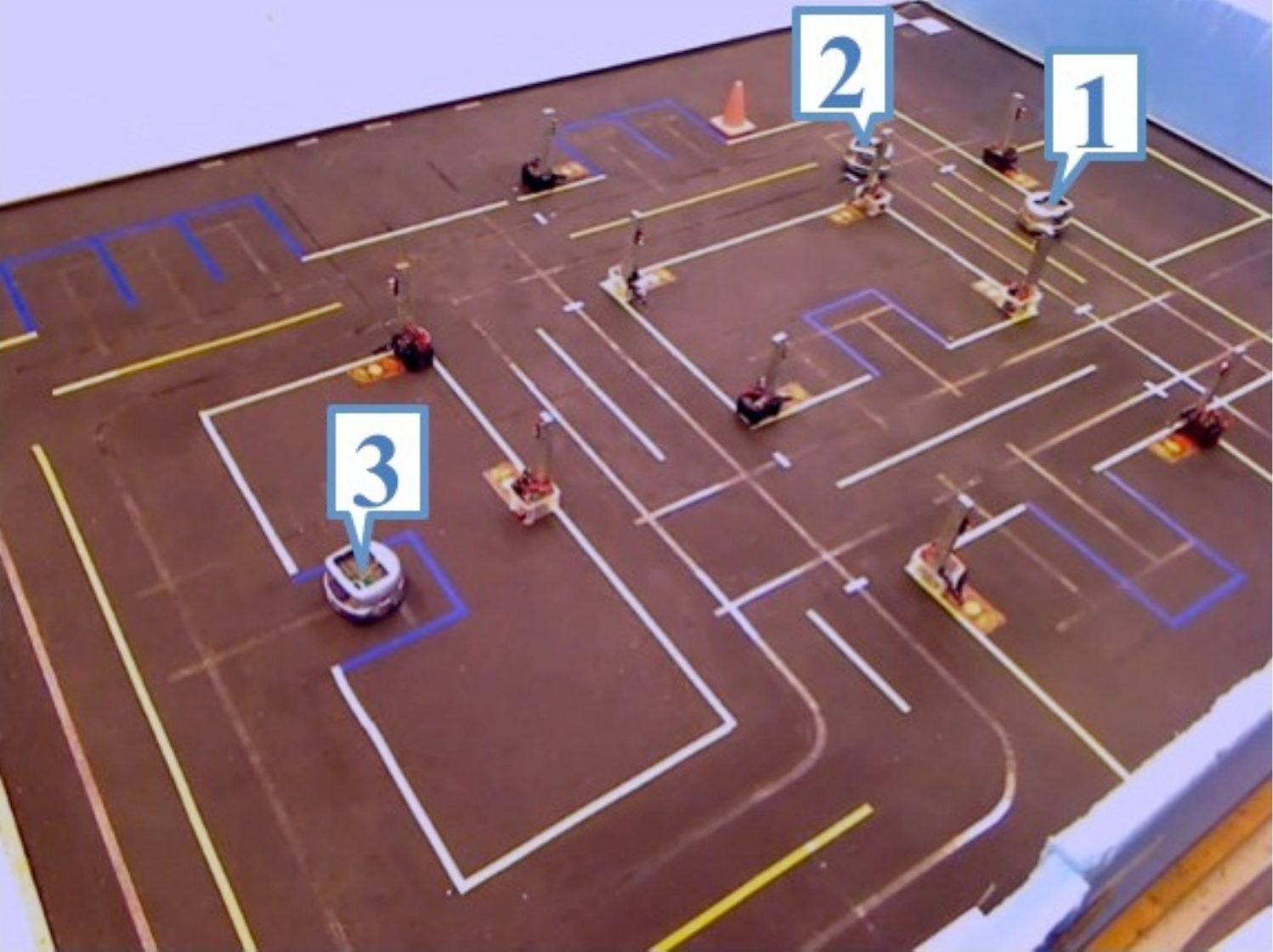}  &
\includegraphics[scale=0.17]{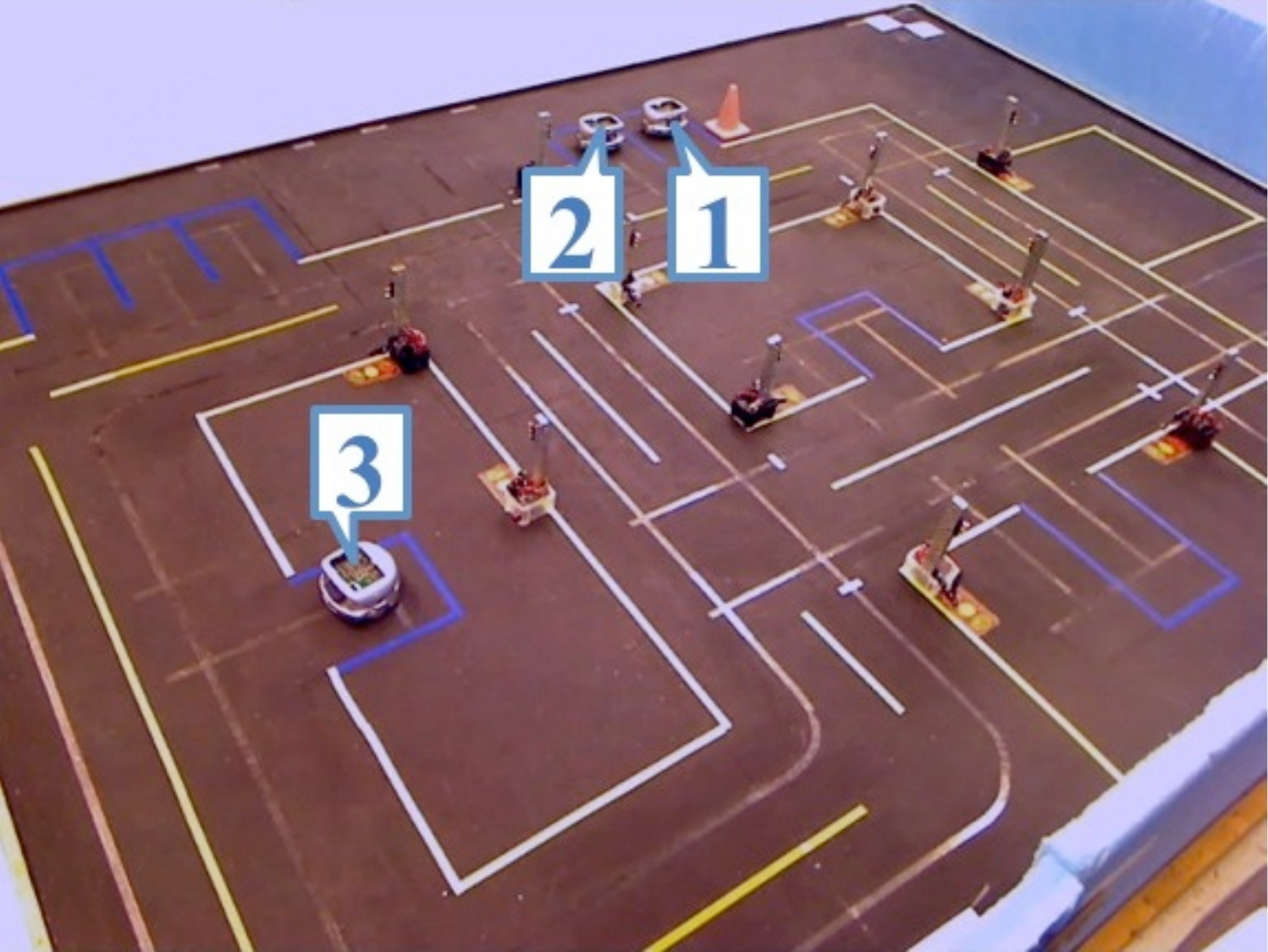} \\
(4) & (5) & (6)
\end{tabular}
\caption{Six snapshots from the deployment corresponding to the given cc-strategies.
The labels for the roads, intersections, and parking spaces are given in Fig. \ref{fig:platform}.
(1) the position of the cars immediately after the initial time, when robots $1$, $2$ and $3$ are on roads $R_{1r}, R_{5r}$ and $R_{2r}$, respectively; (2) robot $2$ is waiting for the other two robots to enter parking lot $P_1$ at which the heavy request $H_1$ occurs; (3) both robots $2$ and $3$ are at $P_{1}$ waiting for robot $1$; (4) all three robots are at $P_{1}$ simultaneously, and therefore request $H_1$ is serviced; (5) robot $3$ services the light request $L_{3}$ at $P_{3}$ and finishes its task; (6) eventually robots $1$ and $2$ stop at $P_2$ and service $H_{2}$ together infinitely many times.}\label{fig:snapshots}
\end{figure}

\section{Conclusions and Future Works}\label{sec:future}

We present an algorithmic framework to deploy a team of agents from a task specification given as an LTL formula over a set of properties. Given the agent capabilities to satisfy the properties, and the possible cooperation requirements for the shared properties, we find individual control and communication strategies such that the global behavior of the system satisfies the given specification. We illustrate the proposed method with experimental results in our Robotic Urban-Like Environment (RULE).

As future work, we will consider reducing the computational complexity and applying this approach to a team of agents with continuous dynamics.
Also, we plan to accommodate more realistic models of agents that can capture uncertainty and noise in the system, such as Markov Decision Processes(MDP) and Partially Observed Markov Decision Processes(POMDP), and probabilistic specification languages such as PLTL.



\bibliographystyle{IEEEtran}
\bibliography{refer,}
\end{document}